\documentclass[preprint,1p,12pt]{elsarticle}

\usepackage{xcolor,tikz}
\usetikzlibrary{datavisualization}

\usepackage{hyperref,soul}
\usepackage{amssymb,amsmath}
\usepackage[utf8]{inputenc}
\usepackage{graphicx,color}
\usepackage{cancel}
\usepackage{nicefrac}

\newcommand{\ot}{\leftarrow}

\newcommand{\adjoint}{\mathop{\&}\nolimits}

	\definecolor{naranjauca}{cmyk}{ 0, 0.6, 1, 0}

\usepackage{pgfplots}
\pgfplotsset{compat=newest}

\makeatletter
 \pgfplotsset{
    xtick parsed/.code={
        \c@pgf@counta 0\relax
        \foreach \x in {#1} {
            \pgfmathparse{\x}
            \ifnum\c@pgf@counta=0
                \xdef\pgfplots@xtick{\pgfmathresult}
            \else
                \xdef\pgfplots@xtick{\pgfplots@xtick,\pgfmathresult}
            \fi
            \global\advance\c@pgf@counta 1\relax
        }
    }
 } 
 
 \pgfplotsset{
    ytick parsed/.code={
        \c@pgf@counta 0\relax
        \foreach \x in {#1} {
            \pgfmathparse{\x}
            \ifnum\c@pgf@counta=0
                \xdef\pgfplots@ytick{\pgfmathresult}
            \else
                \xdef\pgfplots@ytick{\pgfplots@xtick,\pgfmathresult}
            \fi
            \global\advance\c@pgf@counta 1\relax
        }
    }
 }

 \makeatother

\journal{Computational and Applied Mathematics}
\date{November 20, 2023}

\newtheorem{theorem}{Theorem}

\newtheorem{lemma}[theorem]{Lemma}
\newtheorem{proposition}[theorem]{Proposition}
\newdefinition{definition}[theorem]{Definition }
\newdefinition{remark}[theorem]{Remark }
\newdefinition{example}[theorem]{Example }
\newproof{proof}{Proof}

\def\upN{\ensuremath{\uparrow_{N}}}
\def\downN{\ensuremath{\downarrow^{N}}}
\def\upPi{\ensuremath{\uparrow_{\pi}}}
\def\downPi{\ensuremath{\downarrow^{\pi}}}
\def\up{\ensuremath{\uparrow}}
\def\down{\ensuremath{\downarrow}}

\begin{document}

\begin{frontmatter}

\title{
Factorizing formal contexts from\\ closures of necessity operators%
\tnoteref{t1}}
\tnotetext[t1]{Partially supported by the 2014--2020 ERDF Operational Programme in collaboration with the State Research Agency (AEI) in projects PID2019-108991GB-I00 and PID2022-137620NB-I00, with the Ecological and Digital Transition Projects 2021 of the Ministry of Science and Innovation in project TED2021-129748B-I00, with the Department of Economy, Knowledge, Business and University of the Regional Government of Andalusia in project FEDER-UCA18-108612, and by the European Cooperation in Science \& Technology (COST) Action CA17124.}

\author[uca]
{Roberto G. Aragón\tnoteref{t2}}
\author[uca]
{Jesús Medina}
\author[uca]
{Eloísa Ramírez-Poussa}

\tnotetext[t2]{Corresponding author.}

\address[uca]
{Department of Mathematics,
 University of  C\'adiz. Spain\\
 Email: \texttt{\{roberto.aragon,jesus.medina,eloisa.ramirez\}@uca.es} }

\begin{abstract}
Factorizing datasets is an interesting process in a multitude of approaches, but many times it is not possible or efficient the computation of a factorization of the dataset. 
A method to obtain independent subcontexts of a formal context with Boolean data was proposed in~\cite{dubois:2012}, based on  the   operators used in possibility theory. In this paper, we will analyze this method and study different properties  related to the pairs of sets from which a factorization of a formal context arises.  We also inspect how the properties given in the classical case can be extended to the fuzzy framework, which is  essential  to obtain a mechanism that allows the computation of independent subcontexts of a fuzzy context.

\end{abstract}

\begin{keyword}
Formal concept analysis, factorization, multi-adjoint framework, independent subcontext
\end{keyword}

\end{frontmatter}

\section{Introduction}

The factorization of relational data represents a relevant research line since the seventies~\cite{BARTL23,Fagin77,Gugliermo23,Heath71,Trevor96,Kome23,KOYDA202370,OLIVEIRA22}. Being able to factorize a (real) dataset makes possible to reduce the complexity of information processing and, therefore, to obtain the solution of the problem to be solved in a more efficient way~\cite{BELOHLAVEK2015,Chi2023}. Moreover, there exist another two fundamental aspects: the extracted factors reveal important information about the whole  data and the factors can be considered as new variables, originally hidden in the data and revealed by factorization. 

A mathematical theory for extracting knowledge from relational databases is Formal Concept Analysis (FCA, for short) introduced at the beginning of the eighties~\cite{GanterW,wille:1982}. In particular, the databases considered in this theory are called contexts and are composed of a set of objects, a set of attributes and a relationship between these sets. The tools provided by FCA can properly manipulate data and extract relevant information from it, representing the information by means of the algebraic structure of a complete lattice~\cite{OJEDAHERNANDEZ23,singh18,singh19,Zhang10057970}
. One of the most versatile fuzzy extensions of this theory is the one given by the multi-adjoint framework~\cite{TFS:2020-acmr,ar:ins:2015,ins2018:cmr,ins-medina, mor-fss-cmpi}. The characteristics of this framework provide it with a greater capacity to model real problems.

One of the most important research lines within FCA is the factorization of formal contexts. In this regard, different procedures have been already developed, such as~\cite{BELOHLAVEK201836,belohlavek2019,Kumar15MCS,dubois:2012,KRIDLO22IS,TRNECKA201875}.
In~\cite{dubois:2012}, the authors make use of tools from possibility theory to obtain independent subcontexts of a formal context. Specifically, by using modal operators a context is split into two or more independent subcontexts of smaller size which can be studied separately more easily and from which the original context can be recovered.

Recently, in~\cite{aragonCCIS22} an initial study    was presented about the necessity operators applied to the factorization of formal contexts in FCA. 
This study  continues in this paper analyzing the pairs of closed subsets under the necessity operator which form a complemented complete lattice~\cite{georgescu:2004}.  
The minimal independent contexts will be characterized by the supremum irreducible elements of this lattice and these pairs will also satisfy interesting relationships with   property-oriented concepts and   formal concepts. 
In this paper, we continue with the study initiated in~\cite{aragonCCIS22}, providing additional properties that are satisfied by blocks of concepts corresponding to the independent subcontexts in the classical FCA.
Moreover, we will consider the necessity operators in the fuzzy case and analyze the pairs of closures obtained from them. We  will prove that the set of these pairs also forms a complete lattice, but the relationship between these pairs and the independent subcontexts is naturally unclear, in general. Hence, we will study a fuzzy frame in which the translation of the properties of the classical case  will be satisfied.

This paper is organized as follows: Section~\ref{preliminares} recalls different preliminaries notions and results required in this work. In Section~\ref{Facto_c}, properties related to the blocks of concepts associated with independent subcontexts in classical FCA are introduced. The extensions of some of these properties to the fuzzy framework are given in Section~\ref{Facto_f0} together with additional properties to determine intervals of concepts in the fuzzy environment. Finally, we expose some conclusions and prospects of future works in Section~\ref{conclusion}.

\section{Preliminaries}\label{preliminares}

In this section, we recall several notions concerning FCA and its fuzzy generalization to the multi-adjoint framework, which have been taken into consideration to develop this paper.  We will split this section into two parts, in order to expose preliminary notions as clearly as possible. The first part will be focused on classical modal operators and basic notions in FCA and the second part will concern the generalizations of these notions to the multi-adjoint framework.

\subsection{Formal concept analysis}

First of all, a context in FCA is a triple  $(A, B, R)$ where $A$ is a set of attributes, $B$ is a set of objects and $R\subseteq A\times B$ is a relationship, such that $(a,b)\in R$ if the object $b\in B$ possesses the attribute $a\in A$. Moreover, the mappings ${}^\up\colon 2^B \rightarrow 2^A$ and ${}^\down\colon 2^A \rightarrow 2^B$ defined as follow:
\begin{align*}
	X^\uparrow &= \{a\in A\mid (a,b)\in R, \mbox{ for all } b\in X \}\\
	Y^\downarrow &= \{b\in B\mid (a,b)\in R, \mbox{ for all } a\in Y \}
\end{align*}
with $X\subseteq B$ and $Y\subseteq A$, are called \emph{derivation operators}. The pair $(^\up, ^\down)$ forms an antitone Galois connection~\cite{GanterW}. Furthermore, a pair {of sets  $X \subseteq B$ and $Y \subseteq A$, satisfying that $X^{\uparrow}=Y$ and $Y^\downarrow=X$, is called a \emph{concept} and denoted as  $\langle X,Y\rangle$}. In addition, all the concepts together with the inclusion ordering on the left argument (or the opposite of the inclusion order on the right argument) has the algebraic structure of a complete lattice, which is called \emph{concept lattice} and it is denoted by $\mathcal{C}(A,B,R)$, that is, for each $ \langle X_1, Y_1\rangle, \langle X_2, Y_2\rangle\in \mathcal{C}(A,B,R)$, we have that  $ \langle X_1, Y_1\rangle\preceq \langle X_2, Y_2\rangle$ if $X_1\subseteq X_2$ (or, equivalently, $Y_2\subseteq Y_1$).

Besides, the derivation operators previously presented are not the only operators that can be defined on a formal context, there are other modal operators that can be defined  for $X\subseteq B$ and $Y\subseteq A$~\cite{DuboisSP07,DuntschG02,yao06}.  The mappings $^{\upPi}\colon 2^B\rightarrow 2^A$, $^{\downN}\colon 2^A\rightarrow 2^B$, defined as follow:
\begin{align*}
	X^{\upPi} &= \{a\in A\mid \mbox{ there exists } b\in X, \mbox{ such that } (a,b)\in R \}\\
	Y^{\downN} &= \{b\in B\mid \mbox{ if } (a,b)\in R, \mbox{ then }  a\in Y,  \mbox{ for all } a\in A\},
\end{align*}
are called \emph{possibility and necessity operator}, respectively. Analogously, the mappings $^{\upN}\colon 2^B\rightarrow 2^A$, $^{\downPi}\colon 2^A\rightarrow 2^B$ are defined as:
\begin{align*}
	X^{\upN} &= \{a\in A\mid  \mbox{ if } (a,b)\in R, \mbox{ then }  b\in X, \mbox{ for all }  b\in B\}\\
	Y^{\downPi} &= \{b\in B\mid  \mbox{ there exists } a\in Y, \mbox{ such that } (a,b)\in R\}
\end{align*}
The pairs $(^{\upPi}, ^{\downN})$ and $(^{\upN}, ^{\downPi})$ form isotone Galois connections and, as a consequence, the necessity operators preserve the intersection of subsets and the possibility operators preserve unions, that is, $(X_1\cap X_2)^{\upN} = X_1^{\upN} \cap X_2^{\upN}$ and $(X_1\cup X_2)^{\upPi} = X_1^{\upPi} \cup X_2^{\upPi}$, for all $X_1,X_2\subseteq B$, and analogously for attributes~\cite{DisjunctiveINS2021}. From these pairs, we obtain the concept lattices called \emph{property-oriented concept lattice} and \emph{object-oriented concept lattice}~\cite{chenyao08,DuntschG02,camwa-medina}.

In addition,  we need to recall the notion of join-irreducible element of a lattice.

\begin{definition}[\cite{DaveyPriestley}]\label{def:irred}
	Given a lattice $(L,\preceq)$, such that $\vee$ is the join operator,  and an element $x\in L$  verifying:
	\begin{enumerate}
		\item If $L$ has a bottom element $\bot$, then $x\neq \bot$.
		\item If  $x= y\vee z$, then $x=y$ or $x=z$, for all $y,z\in L$.
	\end{enumerate}
	The element $x$ is called \emph{join-irreducible ($\vee$-irreducible) element} of $L$.  
\end{definition}

Another important elements we can find in a lattice are the atoms.

\begin{definition}\label{def:atom}
Let $(L,\preceq)$ be a lattice with bottom element $\bot$. An element $x\in L$ verifying that  $\bot\prec x$ and there is no $y\in L$ such that $\bot \prec y \prec x$ is called an \emph{atom} of $L$. A \emph{coatom} of $L$ is defined dually.
\end{definition}

\subsection{Multi-adjoint framework}\label{version:multi}

As we previously mentioned, in this paper we will consider the fuzzy generalization of FCA given by the multi-adjoint framework~\cite{mor-fss-cmpi}. We recall the generalizations of the previous notions to the multi-adjoint frame.
 First of all, the operators called adjoint triples~\cite{fss:cmr:2013} need to be recalled.

\begin{definition}\label{def:adjoint}
	Let $(P_1,\leq_1)$, $(P_2,\leq_2)$, $(P_3,\leq_3)$ be posets and
	$\adjoint \colon P_1\times P_2 \rightarrow P_3$,
	$\swarrow\colon P_3 \times P_2 \rightarrow P_1$,
	$\nwarrow\colon P_3 \times P_1 \rightarrow P_2$ be
	mappings, then $(\adjoint,\swarrow,\nwarrow)$  is an
	\emph{adjoint triple} with respect to $P_1, P_2, P_3$ if:
	\begin{equation}\label{ap}
		x\leq_1 z  \swarrow y \quad \!\!\hbox{iff} \!\!\quad x \adjoint y \leq_3 z    \quad\!\!  \hbox{iff} \!\!\quad  y\leq_2 z \nwarrow x
	\end{equation}
  	where $x\in P_1$, $y\in P_2$ and $z\in P_3$. Condition~\eqref{ap}  is called \emph{adjoint property}.
\end{definition}

Notice that,  if an operator $\adjoint \colon P_1\times P_2 \rightarrow P_3$ is part of an adjoint triple, then the operators $\swarrow\colon P_3 \times P_2 \rightarrow P_1$,
$\nwarrow\colon P_3 \times P_1 \rightarrow P_2$ are uniquely determined~\cite{ija-cmr15}. 
Moreover, the following proposition can be useful when we have to handle with these operators.

\begin{proposition}[\cite{ija-cmr15}]\label{ijar15}
	Let  $(\&, \swarrow, \nwarrow)$ be an adjoint triple with respect to the posets $(P_1, \leq_1)$, $(P_2, \leq_2)$ and $(P_3, \leq_3)$, then the following properties are satisfied:
	\begin{enumerate}
	\item $\bot_1\& y = \bot_3$, $\top_3\swarrow y = \top_1$, for all $y\in P_2$, when $(P_1, \leq_1, \bot_1,\top_1)$ and $(P_3, \leq_3, \bot_3, \top_3)$ are bounded posets.
	\item $x\&\bot_2 = \bot_3$, $\top_3\nwarrow x = \top_2$, for all $x\in P_1$, when $(P_2, \leq_2, \bot_2,\top_2)$ and $(P_3, \leq_3, \bot_3, \top_3)$ are bounded posets.
	\item When the infimum exists:
	$$
		\left(\bigwedge_{z'\in Z} z'\right)\swarrow y = \bigwedge_{z'\in Z}(z'\swarrow y)
	$$
	for any $Z\subseteq P_3$ and $y\in P_2$.
	\end{enumerate}
\end{proposition}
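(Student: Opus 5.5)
Each of the three items follows directly from the adjoint property~\eqref{ap}, using only the order-theoretic meaning of $\bot$, $\top$ and infimum; no further structure of the posets is needed.

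\emph{Item 1.} To see $\bot_1\adjoint y=\bot_3$, note that $\bot_1\leq_1 z\swarrow y$ holds for every $z\in P_3$. Taking $z=\bot_3$ and applying~\eqref{ap} gives $\bot_1\adjoint y\leq_3\bot_3$, hence equality. For $\top_3\swarrow y=\top_1$, observe that $x\adjoint y\leq_3\top_3$ for every $x\in P_1$. By~\eqref{ap} this means $x\leq_1\top_3\swarrow y$ for all $x$; taking $x=\top_1$ yields $\top_1\leq_1\top_3\swarrow y$, so equality holds.

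\emph{Item 2.} This is symmetric to Item~1, using the other half of~\eqref{ap}, namely $x\adjoint y\leq_3 z$ iff $y\leq_2 z\nwarrow x$. Since $\bot_2\leq_2\bot_3\nwarrow x$, we get $x\adjoint\bot_2\leq_3\bot_3$. Since $x\adjoint y\leq_3\top_3$ for all $y$, we get $y\leq_2\top_3\nwarrow x$ for all $y$, in particular for $y=\top_2$.

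\emph{Item 3.} Put $m=\bigwedge Z$. The map $z\mapsto z\swarrow y$ is monotone: if $z_1\leq_3 z_2$, then $(z_1\swarrow y)\adjoint y\leq_3 z_1\leq_3 z_2$ by~\eqref{ap} applied to $x=z_1\swarrow y$, so $z_1\swarrow y\leq_1 z_2\swarrow y$. Consequently $m\swarrow y$ is a lower bound of $\{z'\swarrow y\mid z'\in Z\}$. To show it is the greatest one, let $x$ be any lower bound, so $x\leq_1 z'\swarrow y$ for all $z'\in Z$. By~\eqref{ap}, $x\adjoint y\leq_3 z'$ for all $z'$, hence $x\adjoint y\leq_3 m$, and therefore $x\leq_1 m\swarrow y$. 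Thus $m\swarrow y$ is the infimum of $\{z'\swarrow y\mid z'\in Z\}$, which gives the stated equality. This also shows that the right-hand infimum exists whenever the left one does.

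The only point needing care is Item~3. Monotonicity must be derived from~\eqref{ap}, and the infimum on the right-hand side must be identified through its universal property rather than assumed to exist beforehand. With those two observations the argument is routine.
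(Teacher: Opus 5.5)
Your proof is correct: every item follows from the adjoint property exactly as you argue, and in item~3 you derive the existence of the right-hand infimum rather than assuming it. The paper gives no proof of its own, since it only cites the result. Your argument is the standard one, with item~3 being the familiar fact that $z\mapsto z\swarrow y$ is the upper adjoint of $x\mapsto x\adjoint y$ and therefore preserves all existing infima.
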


{
Some examples of adjoint triples are the G\"odel, product and \L ukasiewicz t-norms together with their residuated  implications, which will be used in different examples of this paper. Notice that, the  G\"odel, product and \L ukasiewicz  t-norms are commutative, therefore the residuated implications satisfy that  $\swarrow^{\text{G}}=\nwarrow_{\text{G}}$, $\swarrow^{\text{P}}=\nwarrow_{\text{P}}$ and $\swarrow^{\text{L}}=\nwarrow_{\text{L}}$.  
 \begin{example}\label{ex.non.commut}
Given   $m\in \mathbb N$, the set $[0,1]_m$ is   a regular partition of $[0,1]$ in $m$ pieces, for example  $[0,1]_4=\{0,0.25,0.5,0.75,1\}$ divide the unit interval in four pieces.

A discretization of the product t-norm is the operator  $\adjoint^*_{\mathrm{P}}\colon [0,1]_{4}\times [0,1]_{8}\to [0,1]_{10}$  defined,  for each $x\in
[0,1]_{4}$ and $y\in[0,1]_{8}$ as:
\[
x \adjoint^*_{\mathrm{P}}y= \frac{\textstyle  \lceil 10\cdot x\cdot y\rceil}{\textstyle 10}
\]
 where $\lceil\,\_\,\rceil$ is the ceiling function. In this case the residuated implications
 $\swarrow^*_{\mathrm{P}}  \colon
[0,1]_{10} \times [0,1]_{8}\to [0,1]_{4}$,
$\nwarrow^*_{\mathrm{P}}  \colon [0,1]_{10} \times [0,1]_{4}
\to [0,1]_{8}$ are defined as:
\begin{eqnarray*}
z\swarrow^*_{\mathrm{P}} x  & =  \dfrac { \lfloor 4 \cdot z\ot x\rfloor}{4}\qquad
z\nwarrow^*_{\mathrm{P}} y  & =  \dfrac { \lfloor 8 \cdot  z\ot y\rfloor}{8}  \\
\end{eqnarray*}
where $ z\ot x= 
\begin{cases}
 1& \hbox{if } x\leq y\\
 z/x &\hbox{otherwise}
\end{cases}
$ for all $z,x\in[0,1]$, and    $\lfloor\,\_\,\rfloor$ is the floor function.

Similar adjoint triples can be obtained from  the   Gödel and \L ukasiewicz t-norms~\cite{comparativeAMIS15}. \qed
\end{example}
}

Furthermore, in order to define the notion of context {in the multi-adjoint framework}, we have to fix an algebraic structure called multi-adjoint frame.

\begin{definition}  A \emph{multi-adjoint frame}
is a tuple $(L_1,L_2,P, \adjoint_1, \dots,\adjoint_n )$,
where $(L_1,\preceq_1)$ and $(L_2,\preceq_2)$ are complete lattices, $(P,\leq)$ is a poset and ${(\adjoint_i,\swarrow^i,\nwarrow_i)}$ is an adjoint triple with respect to $L_1, L_2, P$,  for all~$i\in\{1, \dots, n\}$.
\end{definition}

Once a multi-adjoint frame is fixed, the notion of {context in this frame} is defined as follows.

\begin{definition}
 Let  $(L_1,L_2,P, \adjoint_1,
\dots,\adjoint_n  )$ be a multi-adjoint  frame, a  \emph{context} is a tuple $(A,B,R,\sigma)$ such that
 $A$ and $B$ are  non-empty
sets (usually interpreted as attributes and objects, respectively), $R$ is a $P$-fuzzy relation $R \colon A\times B
\rightarrow P$ and $\sigma\colon A\times B\to \{1,\dots,n\}$ is a
mapping  which associates any element in $A\times B$ to some particular
adjoint triple of the frame.

\end{definition}
In addition, the generalization of derivation operators ${}^{\uparrow} \colon L_2^B\rightarrow L_1^A$ and ${}^{\downarrow} \colon L_1^A \rightarrow L_2^B$  are given as follows:
 \begin{eqnarray*}\label{conexGmulti}
	 g^{\uparrow} (a) &=&\inf \{  R(a,b)\swarrow^{\sigma(a,b)}  g(b)\mid b\in B\}\\\label{conexGmulti2}
	 f^{\downarrow} (b)  &=&\inf \{  R(a,b)\nwarrow_{\sigma(a,b)} f(a)\mid a\in A\}
\end{eqnarray*}
for all $g\in L_2^B$, $f\in L_1^A$ and $a\in A$, $b\in B$, where $L_2^B$ and $L_1^A$ denote the set of mappings $g\colon B\to L_2$ and $f\colon A\to L_1$, respectively.

Equivalently, a {\em multi-adjoint concept} is a pair $\langle g, f\rangle$, where $g\in L_2^B $ is a fuzzy subset of objects and $f\in L_1^A $ is a fuzzy subset of attributes, satisfying that $g^{\uparrow} =f$ and $g^{\downarrow} =g$. Furthermore, the set of multi-adjoint concepts together with the usual ordering form a complete lattice.

\begin{definition}
 The  \emph{multi-adjoint concept lattice} associated with a  multi-adjoint frame $(L_1,L_2,P, \adjoint_1, \dots,\adjoint_n  )$ and a context $(A,B,R,\sigma)$ given, is the set
$$
\mathcal{M}=\{\langle g, f\rangle \mid  g\in L_2^B, f\in L_1^A
\hbox{ and } g^{\uparrow} =f, f^{\downarrow}=g\}
$$
where the
ordering is defined by $ \langle g_1, f_1\rangle\preceq \langle g_2, f_2\rangle
\hbox{ if and only if } g_1\preceq_2 g_2 $ (equivalently $f_2\preceq_1
f_1 $), for all $\langle g_1, f_1\rangle, \langle g_2, f_2\rangle\in \mathcal{M}$.
\end{definition}

Similarly to what we have shown for the classical environment, the operators ${}^{\uparrow}$ and ${}^{\downarrow}$ are not the only operators that can be defined on a context.
The classical definitions of the necessity and possibility operators were also generalized in a fuzzy framework~~\cite{ins-medina}, where two complete lattices $(L_1, \preceq_1), (L_2, \preceq_2)$ and a poset $(P, \leq)$ {are fixed}. A multi-adjoint property-oriented frame is given by $(L_1,L_2,P, \adjoint_1,
\dots,\adjoint_n  )$, where $(\adjoint_i,\swarrow^i,\nwarrow_i)$ is an adjoint triple with respect to $P$, $L_2$, $L_1$ for all~$i\in\{1, \dots, n\}$. In this multi-adjoint algebra, the necessity and possibility operators are the mappings ${}^{\downN} \colon L_1^A \rightarrow L_2^B$, ${}^{\upPi} \colon L_2^B\rightarrow L_1^A$, defined as:
\begin{eqnarray*}
	g^{\upPi} (a) &=&\sup \{  R(a,b)\adjoint_{\sigma(a,b)}  g(b)\mid b\in B\}\\
	f^{\downN} (b)  &=&\inf \{  f(a)\nwarrow_{\sigma(a,b)} R(a,b)\mid a\in A\}
\end{eqnarray*}
for all $a\in A, b\in B, g\in L_2^B $ and $f\in L_1^A $. A multi-adjoint object-oriented frame is the tuple $(L_1,L_2,P, \adjoint_1,
\dots,\adjoint_n  )$, where $(\adjoint_i,\swarrow^i,\nwarrow_i)$ is an adjoint triple with respect to $L_1$, $P$, $L_2$ for all~$i\in\{1, \dots, n\}$. In this algebra these operators are given by the mappings ${}^{\upN} \colon L_2^B\rightarrow L_1^A$, ${}^{\downPi} \colon L_1^A \rightarrow L_2^B$ defined as:
\begin{eqnarray*}
	g^{\upN} (a) &=&\inf \{  g(b)\swarrow^{\sigma(a,b)}  R(a,b)\mid b\in B\}\\
	f^{\downPi} (b)  &=&\sup \{  f(a)\adjoint_{\sigma(a,b)} R(a,b)\mid a\in A\}
\end{eqnarray*}
for all $a\in A, b\in B, g\in L_2^B $ and $f\in L_1^A $. 

Notice that the pairs of operators $(^{\upPi}, ^{\downN})$ and $(^{\upN}, ^{\downPi})$ are isotone Galois connections. Moreover, a pair $\langle g, f\rangle$, where $g\in L_2^B $ is a fuzzy subset of objects and $f\in L_1^A $ is a fuzzy subset of attributes, satisfying that $g^{\upPi} =f$ and $f^{\downN} =g$ is called {\em property-oriented concept}. Additionally, a multi-adjoint property-oriented concept lattice is denoted by $\mathcal{M}_{\pi N}$.

{Lastly,  the fuzzy sets $g\in L_2^B $ and $f\in L_1^A $ such that $g(b) = \top_2$, for all $b\in B$, and $f(a)=\top_1$, for all $a\in A$, are denoted as $g_\top$ and $f_\top$, respectively. Similarly, when  $g(b) = \bot_2$, for all $b\in B$, and $f(a)=\bot_1$, for all $a\in A$, they are denoted as $g_\bot$ and $f_\bot$, respectively. }

\section{Properties of the factorization in   FCA }\label{Facto_c}

In~\cite{dubois:2012}, one of the four set-functions of possibility theory is used to characterize independent subcontexts (i.e. contexts that have no common objects and no common properties  from which the original context can be recovered) of a given {finite} context. In particular,  the pair of operators of actual necessity, $(^{\upN}, ^{\downN})$ is considered to decompose the relation $R$ of a context $(A, B, R)$ {with finite sets $A$  and $B$}, by computing the following intersection:
\[
R^* = \bigcap\{ (X\times Y)\cup ({X}^c\times{Y}^c)\mid X\subseteq B, Y\subseteq A,  X^{\upN} = Y,~ Y^{\downN} = X\}
\]
where $X^{c}$ and $Y^{c}$ are the complements of $X$ and $Y$, respectively.

From the study presented in~\cite{dubois:2012}, we can bring to light some interesting properties from a pair of subsets $X\subseteq B$, $Y\subseteq A$, that satisfies the following equalities:
\begin{equation}
	X^{\upN} = Y \mbox{ and } Y^{\downN} = X,\label{op:necessity}
\end{equation}
The set of all pairs satisfying Expression~(\ref{op:necessity}) {is denoted} by $\mathcal{C}_N$, that is, 
\[
\mathcal{C}_N= \{(X, Y)\mid X\subseteq B, Y\subseteq A, X^{\upN} = Y,~ Y^{\downN} = X\}
\]

Different properties of this decomposition will be analyzed in this section, which will be paramount in the extension of
 this factorization procedure to more general environments, such as the fuzzy case provided by the multi-adjoint framework.
In the following, we assume that the data table has neither empty rows nor attributes that are possessed by all objects in the context, this kind of contexts are called \emph{normalized}. Notice that, this assumption is not a real restriction, but if some of the objects/attributes have no attribute/object or have all the attributes/objects, they can be removed at the beginning of the procedure and taken into consideration at the end. In addition, since the context is normalized, the pairs $\langle B, \varnothing \rangle$ and $ \langle \varnothing, A \rangle$ are concepts,  {indeed the are the top and bottom elements of the associated concept lattice, respectively.}

As a starting point, we recall that each pair $(X,Y)$ belonging to $\mathcal{C}_N$ determines an independent  subcontext of the original context~\cite{dubois:2012}. As a consequence, its complement  $(X^{c}, Y^{c})$, where $X^{c}$ and $Y^{c}$ are the complements of $X$ and $Y$ respectively, also belongs to $\mathcal{C}_N$ and thus, determines another independent subcontext.  

\begin{lemma}\label{lemma:complement}
Given a context $(A,B,R)$ and a pair $(X,Y)\in \mathcal{C}_N$, the complement of the pair $(X,Y)$ also belongs to $\mathcal{C}_N$, that is, $(X, Y)^c=(X^{c}, Y^{c})\in \mathcal{C}_N$.
\end{lemma}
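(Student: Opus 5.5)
We argue directly from the definitions of the necessity operators ${}^{\upN}$ and ${}^{\downN}$, which both have a universally quantified "if $(a,b)\in R$ then \dots" form. The key observation is that the two equalities $X^{\upN}=Y$ and $Y^{\downN}=X$ together say that no element of $R$ crosses between the blocks $X\times Y$ and $X^c\times Y^c$, and this condition is symmetric under taking complements.

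First we unfold the hypothesis. The inclusion $X^{\upN}\supseteq Y$ means that for every $a\in Y$ and every $b$ with $(a,b)\in R$ we have $b\in X$; hence $R\cap (Y\times X^c)=\varnothing$ (writing pairs as (attribute, object)). Similarly $Y^{\downN}\supseteq X$ gives $R\cap(Y^c\times X)=\varnothing$. The reverse inclusions give the rest. $X^{\upN}\subseteq Y$ says that every $a\notin Y$ either has some related object outside $X$, or has no related object at all; the latter is excluded because the context is normalized. Likewise $Y^{\downN}\subseteq X$ says every $b\notin X$ is related to some attribute outside $Y$, using that there are no empty rows or columns. We should check carefully which normalization condition gives which direction. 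If it is not sufficient, we instead derive the lower inclusions directly as below.

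Next we verify $(X^c)^{\upN}=Y^c$. For $\supseteq$: take $a\in Y^c$ and $(a,b)\in R$. If $b\in X$, then $(a,b)\in R\cap(Y^c\times X)=\varnothing$, a contradiction; so $b\in X^c$. For $\subseteq$: take $a\in (X^c)^{\upN}$ and suppose $a\in Y$. Every object related to $a$ lies in $X^c$, but since $a\in Y$, every object related to $a$ also lies in $X$ (by $Y\subseteq X^{\upN}$). Hence $a$ has no related object, which contradicts normalization (each attribute is possessed by some object, as the paper notes that $\langle\varnothing,A\rangle$ is a concept). So $a\in Y^c$. The equality $(Y^c)^{\downN}=X^c$ is symmetric: $\supseteq$ uses $R\cap(Y\times X^c)=\varnothing$, and $\subseteq$ uses that every object has at least one attribute (no empty rows).

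The main obstacle is the $\subseteq$ directions, since they rely on the normalization assumption to rule out isolated attributes or objects. Without it the lemma fails: an attribute with no objects belongs to every $X^{\upN}$. So the work lies in pinning down exactly which nondegeneracy each direction uses. If the stated normalization ("no empty rows, no full attributes") does not literally exclude empty columns, we will invoke the consequence the paper records, that $\langle\varnothing,A\rangle$ is a concept, i.e.\ $\varnothing^{\uparrow}=A$ and $A^{\downarrow}=\varnothing$, and adapt the argument accordingly.
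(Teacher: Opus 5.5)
Your proof is correct and is essentially the paper's argument. Your two inclusions for $(X^c)^{\upN}=Y^c$ are the paper's two cases; your $\subseteq$ step is its chain $a\in X^{\upN}\cap(X^c)^{\upN}=\varnothing^{\upN}=\varnothing$. You also write out the dual equality $(Y^c)^{\downN}=X^c$, which the paper leaves implicit.

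One correction: the nondegeneracy you use is exactly $\varnothing^{\upN}=\varnothing$ and $\varnothing^{\downN}=\varnothing$, that is, no attribute without objects and no object without attributes. That comes from the normalization clause saying such attributes/objects are removed, not from $\langle\varnothing,A\rangle$ being a concept. That fact only says $A^{\down}=\varnothing$, which holds automatically once some attribute has no objects. So your parenthetical justification and your last-paragraph fallback would not work.
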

\begin{proof}
 We will proceed by reductio ad absurdum. We will suppose that $(X,Y)\in \mathcal{C}_N$ and $(X^c,Y^c)\notin \mathcal{C}_N$. Since $(X^{c})^{\upN}\neq Y^c$, we have to distinguish two different cases:
\begin{itemize}
\item There exists $a\in(X^{c})^{\upN}$ such that $a\notin Y^c$. Therefore, we have that $a\in Y$. But we know that $(X,Y)\in \mathcal{C}_N$, as a consequence, the equality $Y=X^{\upN}$ holds and we have that $a\in X^{\upN}$. Since $a\in(X^{c})^{\upN}$ and $a\in X^{\upN}$, we obtain the following chain 

$$
a\in X^{\upN}\cap(X^{c})^{\upN}=(X\cap X^{c})^{\upN}= \varnothing ^{\upN} = \varnothing
$$
which leads us to a contradiction. 
\item There exists $a\in Y^c$ such that $a\notin(X^{c})^{\upN}$. Since $a\notin(X^{c})^{\upN}$, applying the  definition of the operator ${}^{\upN}$, there exists $b\in B$ such that $(a,b)\in R$ and $b\notin X^c$; that is, $b\in X=Y^{\downN}$ from which it can be concluded that $a\in Y$ which is a contradiction, since $a\notin Y$. 

\end{itemize}
\end{proof}

Therefore, there may be different ways of factorizing the original context into independent subcontexts, depending on the cardinality of the set $\mathcal{C}_N$. 

Since we are interested in reducing the complexity in the data processing as much as possible, we should consider the minimal independent subcontexts. However, pairs in $\mathcal{C}_N$ do not entail any minimality. Therefore, we have to find those pairs that generate minimal subcontexts. In this paper, we propose to obtain these minimal subcontexts from a different approach to the one given in~\cite{dubois:2012}.
The elements of $\mathcal{C}_N$ equipped with the operations
\begin{itemize}
	\item $(X_1, Y_1)\sqcup (X_2, Y_2) = (X_1\cup X_2, Y_1\cup Y_2)$
	\item $(X_1, Y_1)\sqcap (X_2, Y_2) = (X_1\cap X_2, Y_1\cap Y_2)$
	\item $(X, Y)^c=(X^{c}, Y^{c})$
\end{itemize}
where $\sqcup$, $\sqcap$ and ${}^c$ represent the supremum, infimum and complement operators, 
have the structure of {a complemented complete} lattice (this was proved in a more general framework in~\cite{georgescu:2004}), with the inclusion order on the left argument or on the right argument, that is, $(X_1,Y_1)\leq (X_2, Y_2)$ if $X_1\subseteq X_2$ or, equivalently, $Y_1\subseteq Y_2$. Hence, $(\mathcal{C}_N,\leq)$ is a complete lattice~\cite{DaveyPriestley}, where $(B,A)$ and $(\varnothing, \varnothing)$ are the top and bottom elements, respectively. 
From this point of view, we  analyze new properties of the decomposition of a formal context, using the necessity operators. Next, several of these properties will be highlighted and some examples to illustrate the decomposition of a context as well as the introduced properties are given.

The first result we present in this paper relates the elements {in} $\mathcal{C}_N$ that determine minimal subcontexts to the $\vee$-irreducible elements of the complete lattice  associated with the elements in $\mathcal{C}_N$.

\begin{proposition}\label{prop:noless}
Let $(A,B,R)$ be a formal context. An $\vee$-irreducible element of $\mathcal{C}_N$ is an atom of $\mathcal{C}_N$.
\end{proposition}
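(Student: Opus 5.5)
We argue by contraposition, using that $\mathcal{C}_N$ is closed under intersection and complement (Lemma~\ref{lemma:complement} together with the lattice structure recalled above). Let $(X,Y)$ be $\vee$-irreducible in $\mathcal{C}_N$. In particular $(X,Y)\neq(\varnothing,\varnothing)$. Suppose it is not an atom. Then there is $(X_1,Y_1)\in\mathcal{C}_N$ with
\[
(\varnothing,\varnothing)<(X_1,Y_1)<(X,Y).
\]
We will show that $(X,Y)$ splits as a join of two strictly smaller elements, contradicting $\vee$-irreducibility.

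The natural candidate for the second piece is the relative complement of $(X_1,Y_1)$ inside $(X,Y)$:
\[
(X_2,Y_2)=(X,Y)\sqcap(X_1,Y_1)^c=(X\cap X_1^c,\;Y\cap Y_1^c).
\]
By Lemma~\ref{lemma:complement}, $(X_1^c,Y_1^c)\in\mathcal{C}_N$. Since $\mathcal{C}_N$ is closed under $\sqcap$ (componentwise intersection), it follows that $(X_2,Y_2)\in\mathcal{C}_N$. The first components can then be computed with plain set algebra, because $\sqcup$ is componentwise union. As $X_1\subseteq X$ and $Y_1\subseteq Y$,
\[
(X_1,Y_1)\sqcup(X_2,Y_2)=\bigl(X_1\cup(X\setminus X_1),\;Y_1\cup(Y\setminus Y_1)\bigr)=(X,Y).
\]

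It remains to check that both pieces are strictly below $(X,Y)$. For $(X_1,Y_1)$ this holds by choice. For $(X_2,Y_2)$ we have $X_2=X\setminus X_1$, so $X_2=X$ would force $X_1=\varnothing$. Since the order is determined by the left component, $X_1=\varnothing$ means $(X_1,Y_1)=(\varnothing,\varnothing)$, which contradicts $(X_1,Y_1)>(\varnothing,\varnothing)$. Hence $X_2\neq X$, and $(X,Y)=(X_1,Y_1)\sqcup(X_2,Y_2)$ with both terms different from $(X,Y)$. This contradicts Definition~\ref{def:irred}, so $(X,Y)$ is an atom.

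The main point to verify carefully is that the pair really determines, and is determined by, its left component. In particular, $X_1=\varnothing$ must force $Y_1=\varnothing$. This holds because $Y_1=X_1^{\upN}=\varnothing^{\upN}$, and $\varnothing^{\upN}=\varnothing$ since the context is normalized: every attribute is possessed by some object, as already used in the proof of Lemma~\ref{lemma:complement}. We also rely on the fact that $\sqcup$ and $\sqcap$ in $\mathcal{C}_N$ are the componentwise set operations, as stated above (following~\cite{georgescu:2004}), so no closure step is needed after taking unions or intersections.
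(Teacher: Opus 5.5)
Your proof is correct and follows the paper's argument. You write the $\vee$-irreducible pair as the join of the smaller non-bottom pair and its relative complement $(X\setminus X_1, Y\setminus Y_1)$, which lies in $\mathcal{C}_N$ by closure under intersection together with Lemma~\ref{lemma:complement}; the paper checks this membership directly via $(X^*\cap X^c)^{\upN}=(X^*)^{\upN}\cap(X^c)^{\upN}$. Your explicit check that $X\setminus X_1\neq X$, using $\varnothing^{\upN}=\varnothing$ in a normalized context, fills in a strictness step that the paper leaves implicit.
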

\begin{proof}
Let us consider a $\vee$-irreducible element $(X^*, Y^*)$ of $\mathcal{C}_N$. We proceed by reduction ad absurdum. We suppose that there exists an element $(X, Y)$ of $\mathcal{C}_N$ such that $(X, Y) \neq (\varnothing, \varnothing)$ and  $(X, Y)<(X^*, Y^*)$. Therefore, the pair $(X^*\backslash X, Y^*\backslash Y)$ is also lesser than $(X^*, Y^*)$. If we prove that this pair belongs to $\mathcal{C}_N$, then we could obtain the $\vee$-irreducible element as union of the pairs $(X, Y)$ and $(X^*\backslash X, Y^*\backslash Y)$ which is a contradiction. Hence, we have the following equalities:
$$
(X^*\backslash X)^{\upN} = (X^*\cap X^c)^{\upN} = (X^*)^{\upN} \cap (X^c)^{\upN} = Y^*\cap Y^c = Y^*\backslash Y
$$
Recall that the second equality holds since the necessity operator satisfies that $ (X_1\cap X_2)^{\upN} = (X_1)^{\upN} \cap (X_2)^{\upN}$. Moreover, by Lemma~\ref{lemma:complement}, we obtain the third equality. The other equality $(Y^*\backslash Y)^{\downN} = X^*\backslash X$ holds in a similar way. Therefore, $(X^*\backslash X, Y^*\backslash Y)\in\mathcal{C}_N$ and consequently, we have that $(X^*, Y^*) = (X, Y) \sqcup (X^*\backslash X, Y^*\backslash Y)$ which is a contradiction. \qed
\end{proof}

From now on, we will denote the $\vee$-irreducible elements of $\mathcal{C}_N$ as $(X^*, Y^*)$. 
In addition, we can go further and determine the smallest independent subcontexts by means of the $\vee$-irreducible elements of $\mathcal{C}_N$.

\begin{proposition}\label{prop:union}
Given a context $(A,B,R)$, the set $\mathcal{C}_N^* $ of all $\vee$-irreducible elements of $(\mathcal{C}_N,\leq)$ determines a partition on the set of attributes $A$ and on the set of objects $B$.
\end{proposition}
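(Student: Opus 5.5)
We want to show that the $\vee$-irreducible elements $(X^*_i, Y^*_i)$, $i\in I$, of $\mathcal{C}_N$ satisfy two things: their first components are pairwise disjoint and cover $B$, and their second components are pairwise disjoint and cover $A$. (If empty components can occur, we would speak of a partition of the nonempty parts.) The plan has three steps: pairwise disjointness, covering, and a short finiteness remark.

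\textbf{Disjointness.} Take two distinct $\vee$-irreducible elements $(X_1^*,Y_1^*)$ and $(X_2^*,Y_2^*)$. Their meet is $(X_1^*\cap X_2^*, Y_1^*\cap Y_2^*)$. It belongs to $\mathcal{C}_N$ because $\mathcal{C}_N$ is closed under $\sqcap$; directly, this follows from ${}^{\upN}$ and ${}^{\downN}$ preserving intersections. The meet lies below both elements. By Proposition~\ref{prop:noless}, each of them is an atom of $\mathcal{C}_N$. So the meet equals either $(\varnothing,\varnothing)$ or the atom itself. If it equalled $(X_1^*,Y_1^*)$, then $(X_1^*,Y_1^*)\leq (X_2^*,Y_2^*)$. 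Since $(X_1^*,Y_1^*)\neq(\varnothing,\varnothing)$ and $(X_2^*,Y_2^*)$ is an atom, the two would coincide, a contradiction. Hence $X_1^*\cap X_2^*=\varnothing$ and $Y_1^*\cap Y_2^*=\varnothing$.

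\textbf{Covering.} This is the main step. Since $A$ and $B$ are finite, $\mathcal{C}_N\subseteq 2^B\times 2^A$ is a finite lattice. In a finite lattice every element is the join of the $\vee$-irreducible elements below it. Apply this to the top $(B,A)$:
\[
(B,A)=\bigsqcup_{i\in I}(X_i^*,Y_i^*)=\Big(\bigcup_{i\in I}X_i^*,\ \bigcup_{i\in I}Y_i^*\Big),
\]
using that joins in $\mathcal{C}_N$ are componentwise unions. This gives the covering of $B$ and of $A$.

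\textbf{Where care is needed.} The one delicate point is justifying the representation of the top as a join of $\vee$-irreducibles. I would do it by a standard induction on the finite lattice. Suppose some element is not such a join, and pick a minimal one. It is not $\bot$ (the empty join) and not $\vee$-irreducible, so it is a join of two strictly smaller elements. Each of these is a join of $\vee$-irreducibles by minimality, which is a contradiction.

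Alternatively, one could avoid general lattice theory. For any $b\in B$, choose a minimal pair of $\mathcal{C}_N$ containing $b$ in its first component. It is nonempty, and it is an atom: otherwise Lemma~\ref{lemma:complement} combined with intersection, as in the proof of Proposition~\ref{prop:noless}, would split it and produce a smaller pair containing $b$. It is therefore $\vee$-irreducible. The same argument applies to $a\in A$.

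Finally, each $X_i^*$ or $Y_i^*$ may be empty only in degenerate situations. These should be noted, or excluded using the normalization assumption.
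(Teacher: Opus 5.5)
Your proof is correct and follows the paper's route: covering comes from writing the top $(B,A)$ as a join of $\vee$-irreducibles, using finiteness and the fact that joins are componentwise unions, and disjointness comes from Proposition~\ref{prop:noless}. You also make explicit a step the paper leaves implicit: meets in $\mathcal{C}_N$ are componentwise intersections, and two distinct atoms must meet in $(\varnothing,\varnothing)$.
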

\begin{proof}
First of all, given the set $\mathcal{C}_N^*=\{(X^*_i, Y^*_i)\mid i\in I\}$ of all $\vee$-irreducible elements of $\mathcal{C}_N$, where  $I$ is   an index set, we will prove that $\mathcal{C}_N^*$ determines a partition of the set of objects. From the descendent chain condition of the concept lattice~\cite{DaveyPriestley}, which arise because the context is finite, we have that  any element different from the bottom element can be expressed as supremum of  $\vee$-irreducible elements of $\mathcal{C}_N$.  As we previously commented, the supremum in  $(\mathcal{C}_N,\leq)$ is defined by the union, and therefore, we have that the top element of $\mathcal{C}_N$, $(B,A)$, satisfies that $B = \bigcup_{i\in I} X^*_i$. On the other hand, by Proposition~\ref{prop:noless}, we have that every pair $(X^*_i, Y^*_i), (X^*_j, Y^*_j)\in\mathcal{C}_N^*$ with $i\neq j$ satisfies that $X^*_i \cap X^*_j = \varnothing$. Therefore, the set $\{X_i^*\mid i\in I\}$ is a partition of the set of objects $B$. 

Similarly, it is obtained that the set $\{Y_i^*\mid i\in I\}$ is a partition of the set of attributes $A$.\qed

\end{proof}

The following example illustrates the previous results.

\begin{example}\label{ex1}
Let us consider the formal context $(A,B,R)$ {associated with the data} given in {Table~\ref{BoolR}. In addition, the list of concepts and the Hasse diagram of the concept lattice is shown in Figure~\ref{ex:figclas1}.}

	\begin{table}[!h]
		\begin{center}
			\begin{tabular}{|c|cccccc|}
				\hline
				$R$ & $b_1$ & $b_2$ & $b_3$ & $b_4$ & $b_5$ & $b_6$ \\ \hline
				$a_1$& 0 & 1 & 1 & 1 & 0 & 0  \\
				$a_2$& 0 & 0 & 0 & 1 & 0 & 0  \\
				$a_3$& 1 & 0 & 0 & 0 & 0 & 0  \\
				$a_4$& 0 & 0 & 0 & 0 & 1 & 1 \\
				$a_5$& 0 & 0 & 1 & 0 & 0 & 0  \\
				$a_6$& 0 & 0 & 0 & 0 & 1 & 0  \\
				\hline
			\end{tabular}
		\end{center}
		\caption{{Relation  of the formal context}  of Example~\ref{ex1}.}\label{BoolR}
	\end{table}

	\begin{figure}[h!]
	\begin{center}
	\begin{minipage}{0.45\textwidth}
		\begin{tabular}{l}
			$C_0 = \langle \varnothing, A\rangle $\\
			$C_1 =  \langle \{b_5\}, \{a_4,a_6\}\rangle $\\
			$C_2 =  \langle \{b_5,b_6\}, \{a_4 \}\rangle $\\
			$C_3 =  \langle \{b_1\}, \{a_3 \}\rangle $\\
			$C_4 =  \langle \{b_4\}, \{a_1,a_2 \}\rangle $\\
			$C_5 =  \langle \{b_3\}, \{a_1,a_5 \}\rangle $\\
			$C_6 =  \langle \{b_2,b_3,b_4\}, \{a_1,a_2,a_5\}\rangle $\\
			$C_7 =  \langle B, \varnothing\rangle $
		\end{tabular}
	\end{minipage}
\begin{minipage}{0.45\textwidth}
			\tikzstyle{place}=[circle,draw=black!75,fill=white!20, text width= 14pt]
			\begin{tikzpicture}[inner sep=0.75mm,scale=1.2, every node/.style={scale=0.85}]		
				\node at (0,0) (0) [place] {$C_0$};
				\node at (-1,1) (1) [place] {$C_1$};
				\node at (2,1) (5) [place] {$C_5$};
				\node at (-1,2) (2) [place] {$C_2$};
				\node at (0,1.5) (3) [place] {$C_3$};
				\node at (1,1) (4) [place] {$C_4$};
				\node at (0,3) (7) [place] {$C_{7}$};
				\node at (1.5,2) (6) [place] {$C_{6}$};
				
				\draw [-] (0) -- (1)--(2)--(7)--(3)--(0)--(5)--(6)--(7);
				\draw [-] (0) -- (4)-- (6);
 
			\end{tikzpicture}
	\end{minipage}
	\caption{List of concepts and concept lattice of Example~\ref{ex1}.}
	\label{ex:figclas1}
	\end{center}
	\end{figure}

This context is normalized. Therefore, we can {apply Propositions~\ref{prop:noless} and~\ref{prop:union}}, and compute the elements of the set $\mathcal{C}_N$. The list of elements of $\mathcal{C}_N$ and the complete lattice\footnote{Note that the numbers that appear in the nodes of the complete lattice refer to the subscripts of the corresponding pairs in $\mathcal C_N$. } associated with $(\mathcal{C}_N, \leq)$ are given in Figure~\ref{CN}.

		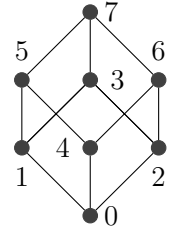
\begin{figure}[!h]
		\begin{center}
	\begin{minipage}{0.7\textwidth}
		\begin{tabular}{l}
			$(X_0,Y_0)= (\varnothing, \varnothing)$\\
			$(X_1,Y_1)= (\{b_5,b_6\},\{a_4,a_6\})$\\
			$(X_2,Y_2) =  (\{b_2,b_3,b_4\}, \{a_1,a_2,a_5\})$\\
			$(X_3,Y_3)=  (\{b_2,b_3,b_4,b_5,b_6\}, \{a_1,a_2,a_4,a_5,a_6 \})$\\
			$(X_4,Y_4)=  (\{b_1\}, \{a_3 \})$\\
			$(X_5,Y_5)=  (\{b_1,b_5,b_6\}, \{a_3,a_4,a_6 \})$\\
			$(X_6,Y_6) =  (\{b_1,b_2,b_3,b_4\}, \{a_1,a_2,a_3,a_5 \})$\\
			$(X_7,Y_7)= (B, A)$\\
		\end{tabular}
	\end{minipage}
\begin{minipage}{0.2\textwidth}
			\tikzstyle{place}=[circle,draw=black!75,fill=black!75]
		\begin{tikzpicture}[inner sep=0.75mm,scale=0.9, every node/.style={scale=0.9}]			
				\node at (0,0) (0) [place, label={[label distance=0.01cm]right:$0$}] {};
				\node at (-1,1) (1) [place, label={[label distance=0.1cm]below:$1$}] {};
				\node at (1,1) (2) [place, label={[label distance=0.1cm]below:$2$}] {};
				\node at (0,2) (3) [place, label={[label distance=0.1cm]right:$3$}] {};
				\node at (0,1) (4) [place, label={[label distance=0.1cm]left:$4$}] {};
				\node at (-1,2) (5) [place, label={[label distance=0.1cm]above:$5$}] {};
				\node at (1,2) (6) [place, label={[label distance=0.1cm]above:$6$}] {};
				\node at (0,3) (7) [place, label={[label distance=0.01cm]right:$7$}] {};
				
				\draw [-] (0) -- (1)--(3)--(2)--(0)--(4)--(5)--(7)--(6)--(4);
				\draw [-] (1) -- (3)-- (2);
				\draw [-] (1) -- (5);
				\draw [-] (2) -- (6);
				\draw [-] (3)-- (7);
 
			\end{tikzpicture}
	\end{minipage}
	\caption{List of elements and lattice of $\mathcal{C}_N$ of Example~\ref{ex1}.}
	\label{CN}
	\end{center}
	\end{figure}
We can see that the $\vee$-irreducible elements are $(X_1,Y_1)$, $(X_2,Y_2)$ and $(X_4,Y_4)$, which  have no concept less than themselves, except the bottom of the lattice, as Proposition~\ref{prop:noless} states. Moreover, we have that $X_1\cup X_2\cup X_4=B$, $X_1\cap X_2=\varnothing$, $X_1\cap X_4=\varnothing$ and $X_2\cap X_4=\varnothing$, as Proposition~\ref{prop:union} asserts. \qed

\end{example}

On the other hand, by their construction, one may think that there is no relation between the pairs of $\mathcal{C}_N$ and the concepts associated with the formal context, but there exists a relevant relationship as it is shown in the following result. Specifically, this result relates the pairs of $\mathcal{C}_N$ to the property-oriented concepts.

\begin{proposition}\label{prop:cp0}
	Given formal context $(A, B, R)$ and a pair $(X, Y)\in\mathcal{C}_N$, satisfying that $X\neq \varnothing$ and $X\neq B$, we have that the pair $\langle X, X^{\upPi}\rangle$ is a concept in the property-oriented framework.
\end{proposition}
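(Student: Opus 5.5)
The plan is to check directly the defining conditions of a property-oriented concept. We must show that the pair $\langle X, X^{\upPi}\rangle$ satisfies $X^{\upPi}=X^{\upPi}$, which holds trivially, and $(X^{\upPi})^{\downN}=X$. Hence everything reduces to proving the equality $(X^{\upPi})^{\downN}=X$. We will prove it as two inclusions, using only that $(^{\upPi},^{\downN})$ is an isotone Galois connection and the equality $Y^{\downN}=X$ coming from $(X,Y)\in\mathcal{C}_N$.

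The inclusion $X\subseteq (X^{\upPi})^{\downN}$ is the standard extensivity of the closure associated with the isotone Galois connection $(^{\upPi},^{\downN})$. Concretely, if $b\in X$ and $(a,b)\in R$, then $a\in X^{\upPi}$ by definition, so $b\in (X^{\upPi})^{\downN}$.

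For the reverse inclusion, the key intermediate step is to show $X^{\upPi}\subseteq Y$. Take $a\in X^{\upPi}$. Then there is $b'\in X$ with $(a,b')\in R$. Since $X=Y^{\downN}$, every attribute related to $b'$ belongs to $Y$, and so $a\in Y$. Now we use that $^{\downN}$ is order-preserving, which is immediate from its definition. This gives
\[
(X^{\upPi})^{\downN}\subseteq Y^{\downN}=X .
\]
Combining the two inclusions yields $(X^{\upPi})^{\downN}=X$, so $\langle X, X^{\upPi}\rangle$ is a property-oriented concept.

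I do not expect a genuine obstacle. The only point requiring care is the step $X^{\upPi}\subseteq Y$, which uses the half $Y^{\downN}=X$ of Expression~(\ref{op:necessity}); the other half, $X^{\upN}=Y$, is not even needed. The hypotheses $X\neq\varnothing$ and $X\neq B$ do not seem essential to the argument. In those degenerate cases the pair is still a property-oriented concept: for a normalized context, $\langle\varnothing,\varnothing\rangle$ and $\langle B,A\rangle$. These hypotheses presumably serve only to exclude trivial cases, and I would remark on this after the proof rather than use them.
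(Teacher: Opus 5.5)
Your proof is correct and is essentially the paper's argument. The paper writes $X^{\upPi\downN}=Y^{\downN\upPi\downN}=Y^{\downN}=X$ using the Galois-connection identity ${}^{\downN\upPi\downN}={}^{\downN}$, and your two inclusions (extensivity of $^{\upPi\downN}$, together with $X^{\upPi}=Y^{\downN\upPi}\subseteq Y$ and monotonicity of $^{\downN}$) are just the element-wise unpacking of that identity; as you note, only $Y^{\downN}=X$ is used, and the hypotheses $X\neq\varnothing$, $X\neq B$ play no role in the paper's proof either.
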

\begin{proof}
 The proof straightforwardly follows by  the properties of the Galois connection $(^{\upPi}, ^{\downN})$.
 Specifically, if $(X, Y)\in\mathcal{C}_N$, we obtain that
 $$
 X^{\upPi\downN} =  Y^{\downN\upPi\downN} =  Y^{\downN}  =X
 $$
 \qed
\end{proof}

The following property relates the pairs of $\mathcal{C}_N$ to the formal concepts associated with the context.
\begin{proposition}\label{prop:cp1}
	Let $(A, B, R)$ be a formal context and $(X, Y)\in\mathcal{C}_N$. If $X^\up\neq\varnothing$, then the pair $\langle X, X^\uparrow\rangle $ is a concept of $\mathcal C(A,B,R)$, that is, $X^{\up\down} = X$.
\end{proposition}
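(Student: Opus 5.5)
The plan is to use the block structure that membership in $\mathcal{C}_N$ forces on $R$: no object of $X$ is related to an attribute outside $Y$, and no attribute of $Y$ is related to an object outside $X$. Then I pick one attribute of $X^\up$ and show that its extent already lies inside $X$. The inclusion $X\subseteq X^{\up\down}$ holds for any antitone Galois connection, so only $X^{\up\down}\subseteq X$ needs proof.

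\emph{Step 1 (block structure).} Let $(X,Y)\in\mathcal{C}_N$. From $X=Y^{\downN}$, every $b\in X$ and every $a$ with $(a,b)\in R$ satisfy $a\in Y$. From $Y=X^{\upN}$, every $a\in Y$ and every $b$ with $(a,b)\in R$ satisfy $b\in X$.

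\emph{Step 2 (main case, $X\neq\varnothing$).} Take $a\in X^\up$, which exists by hypothesis. Since $X\neq\varnothing$, pick some $b_0\in X$; then $(a,b_0)\in R$. By the first fact of Step 1, $a\in Y$. By the second fact of Step 1, $\{a\}^\down=\{b\in B\mid (a,b)\in R\}\subseteq X$. Since $\{a\}\subseteq X^\up$ and ${}^\down$ is antitone,
$$
X^{\up\down}\subseteq \{a\}^\down\subseteq X .
$$
Together with extensivity this gives $X^{\up\down}=X$. Hence $\langle X,X^\up\rangle$ is a concept of $\mathcal C(A,B,R)$.

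\emph{Step 3 (degenerate case $X=\varnothing$, the expected obstacle).} Here $X^\up=A\neq\varnothing$, so the hypothesis does not exclude this case, and we need $A^\down=\varnothing$. This does not follow from membership in $\mathcal{C}_N$. It follows from the normalization assumption made at the beginning of this section: there it is recorded that $\langle\varnothing,A\rangle$ is a concept, namely the bottom of $\mathcal C(A,B,R)$, which says exactly that $A^\down=\varnothing$. I will invoke that remark explicitly here.

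If one prefers not to rely on that remark, the alternative is to restrict the statement to $X\neq\varnothing$, where Step 2 gives the full argument. I expect this edge case to be the only delicate point; the core of the proof is the one-attribute argument of Step 2.
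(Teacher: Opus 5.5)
Your proof is correct and is essentially the paper's argument. The paper also disposes of $X=\varnothing$ through the normalization assumption ($\langle\varnothing,A\rangle$ is a concept). For $X\neq\varnothing$ it takes $a_0\in X^{\up}$, uses some $b\in X=Y^{\downN}$ with $(a_0,b)\in R$ to get $a_0\in X^{\upN}=Y$, and concludes that every object related to $a_0$ lies in $X$. Your chain $X^{\up\down}\subseteq\{a\}^{\down}\subseteq X$ is a compact form of the paper's elementwise check on $b_0\in X^{\up\down}$.
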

\begin{proof}
{If $X=\varnothing$, the result trivially holds since $X^{\up} = \varnothing^{\up} = A$ and $X^{\down\up} = \varnothing^{\down\up}=\varnothing$ because of $(A,B,R)$ is normalized and therefore, $\langle \varnothing, A\rangle $ is a concept. Otherwise, let us consider $X\neq \varnothing$.}
	Since the pair $(^{\up},^{\down})$ is a Galois connection, we have that $X \subseteq X^{\up\down}$. As a consequence, due to $X\neq \varnothing$, then $X^{\up\down}{\neq}\varnothing$.
	
	Now, we consider $b_0\in X^{\up\down} = \{b\in B\mid  (a,b)\in R, \text{ for all } a\in X^{\up}\}$, and we will prove that $b_0\in X$. In particular, taking any {$a_0\in X^\up = \{a\in A \mid (a,b)\in R, \text{ for all } b\in X\}$}, by $b_0\in X^{\up\down} $, we have that $(a_0,b_0)\in R$  and, in addition,    we have that $(a_0,b)\in R$, for all $b\in X$, but $\varnothing\neq X = X^{\upN\downN} = \{b\in B \mid \mbox{ if } (a,b)\in R \mbox{ then } a\in X^{\upN}\}$. Thus, $a_0\in X^{\upN} =\{a\in A \mid \mbox{ if } (a,b)\in R \mbox{ then } b\in X\}$ and therefore, since we have that $(a_0,b_0)\in R$ and $a_0\in X^{\upN}$, we can conclude $b_0\in X$.\qed
\end{proof}

Dually, we can obtain that the pair $\langle Y^\down, Y\rangle$ is a concept of $\mathcal C(A,B,R)$, when the condition  $Y^\down\neq\varnothing$ holds. 

The following result shows when the top element of the independent subcontext differentiates from the top of the original concept lattice.

\begin{proposition}\label{prop:cp2}
	Let $(A, B, R)$ be a formal context and $(X, Y)\in\mathcal{C}_N$, with $X\neq \varnothing$ and $X\neq B$. If $X^\up\neq\varnothing$, then $\langle  X, X^\up\rangle$ is a coatom of $\mathcal{C}(A,B,R)$.
\end{proposition}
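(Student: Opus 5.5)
The plan is to first use Proposition~\ref{prop:cp1} to know that $\langle X, X^\up\rangle$ is a concept of $\mathcal{C}(A,B,R)$. Since $X\neq B$, this concept is distinct from the top element $\langle B,\varnothing\rangle$. It then remains to show that no concept lies strictly between $\langle X, X^\up\rangle$ and $\langle B,\varnothing\rangle$.

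Take any concept $\langle Z, Z^\up\rangle$ with $X\subsetneq Z$. The goal is to prove $Z^\up=\varnothing$. Once this holds, $Z=Z^{\up\down}=\varnothing^\down=B$, so the concept is the top element. Note that $\varnothing^\down=B$ holds because the whole concept lattice has top $\langle B,\varnothing\rangle$, the context being normalized.

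Suppose, towards a contradiction, that some $a\in Z^\up$ exists. The argument then runs in three steps.
\begin{itemize}
\item Since $X\subseteq Z$, antitonicity gives $Z^\up\subseteq X^\up$, so $a\in X^\up$.
\item Because $X\neq\varnothing$, pick $b\in X$; then $(a,b)\in R$. Since $b\in X=Y^{\downN}$, the definition of ${}^{\downN}$ forces $a\in Y$.
\item Because $Z\supsetneq X$, there is $b_0\in Z\setminus X$. As $a\in Z^\up$, we have $(a,b_0)\in R$. But $a\in Y=X^{\upN}$ means every object related to $a$ lies in $X$, so $b_0\in X$, a contradiction.
\end{itemize}
Hence $Z^\up=\varnothing$, and $\langle X,X^\up\rangle$ is a coatom.

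The only delicate point is the second step: we need $a\in Y$, not just $a\in X^\up$. This is exactly where the hypothesis $X\neq\varnothing$ and the closure $X=Y^{\downN}$ are used, in the same way as in the proof of Proposition~\ref{prop:cp1}. The rest is routine Galois-connection bookkeeping.
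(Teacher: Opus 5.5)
Your proof is correct and uses the same key step as the paper's. An attribute $a$ shared by a strictly larger extent lies in $X^{\up}$; via some $b\in X=Y^{\downN}$ it then lies in $Y=X^{\upN}$, which forces any object of that extent outside $X$ back into $X$.

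The differences are only in packaging. You show directly that every concept strictly above $\langle X,X^{\up}\rangle$ has empty intent, instead of assuming an intermediate concept. You also cite Proposition~\ref{prop:cp1} explicitly for $\langle X,X^{\up}\rangle$ being a concept. One small correction: $\varnothing^{\down}=B$ holds vacuously in any context, so it does not need normalization as its justification.
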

\begin{proof}
	We will proceed by reductio ad absurdum. We will suppose that  there exists a concept $\langle X_0, Y_0\rangle$ such that $\langle  X, X^\up\rangle \prec \langle X_0, Y_0\rangle \prec \langle B, \varnothing \rangle$, that is, $X \subset X_0 \subset B$ and $\varnothing \subset Y_0 \subset X^\up$. This means that there exists $b_0\in X_0\setminus X$. Due to $X_0=Y_0^\down$, we have that  $(a,b_0)\in R$, for all $a\in Y_0$. Moreover, due to $Y_0\neq \varnothing$ (because $\varnothing  \subset Y_0 $), we can consider  $a_0\in Y_0$. Since $Y_0 \subset X^\up= \{a\in A \mid (a,b)\in R, \text{ for all } b\in X\}$, we have that  $(a_0, b)\in R$ holds, for all $b\in X$. Now, given $b'\in X$, which exists due to by hypothesis $X\neq\varnothing$,  since 
	\[b'\in X = X^{\upN\downN}=  \{b\in B\mid \mbox{ if } (a,b)\in R \mbox{ then } a\in X^{\upN}\},\]
	we have that $a_0\in X^{\upN} =\{a\in A \mid \mbox{ if } (a,b)\in R \mbox{ then } b\in X\}$. Therefore, from $a_0 \in X^{\upN}$ and $(a_0, b_0)\in R$, we can conclude that $b_0\in X$, which contradicts the hypothesis.\qed
\end{proof}

Consequently, in the previous result it is shown that  the top element of the concept lattice $\mathcal C(A,B,R)$  is the concept directly greater than the concept  $\langle X, X^\up\rangle $ associated with the pair $(X,Y)\in \mathcal C_N$. A similar result can be stated by duality for the pair $\langle Y^\down, Y\rangle $.
Now, the following result analyzes the relationship between $\langle X, X^\up\rangle $  and $\langle Y^\down, Y\rangle $.

\begin{proposition}\label{prop:cp3}
	Given a formal context $(A, B, R)$ and a pair $(X, Y)\in\mathcal{C}_N$ with $X\neq B$, where $X^\up\neq\varnothing$ and $Y^\down\neq\varnothing$, then we have that the concept generated by $\langle X, X^\up\rangle $   is greater than $\langle Y^\down, Y\rangle $, that is, $Y^\down \subseteq X$.
\end{proposition}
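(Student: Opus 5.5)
The plan is a direct element-chasing argument, in the same spirit as the proofs of Propositions~\ref{prop:cp1} and~\ref{prop:cp2}. It uses the equality $Y = X^{\upN}$ together with the definition of ${}^\down$. It does not need Proposition~\ref{prop:cp2} or the concept-lattice structure: once $Y^\down \subseteq X$ is shown, the order claim $\langle Y^\down, Y\rangle \preceq \langle X, X^\up\rangle$ follows, because Proposition~\ref{prop:cp1} and its dual make both pairs concepts and concepts are ordered by inclusion of extents.

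First I will make sure that $Y \neq \varnothing$. The degenerate pair $(\varnothing,\varnothing)\in\mathcal{C}_N$ has to be set aside: there $Y^\down = \varnothing^\down = B$, while $X=\varnothing$. This pair satisfies the hypotheses $X\neq B$, $X^\up = A\neq\varnothing$ and $Y^\down = B\neq\varnothing$, yet $B\subseteq\varnothing$ fails. So, as stated, the result is false for $(\varnothing,\varnothing)$. I will therefore add the hypothesis $X\neq\varnothing$ explicitly, which the surrounding results already assume, rather than claim the statement holds as written. This is the point I regard as the main obstacle.

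Given $X \neq \varnothing$, I will show $Y \neq \varnothing$ as follows. If $Y=\varnothing$, then $X = Y^{\downN} = \varnothing^{\downN}$ is the set of objects possessing no attribute. Since the context is normalized, this set is empty, contradicting $X\neq\varnothing$. Hence I can fix some $a_0\in Y$.

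For the main step, take any $b_0\in Y^\down$. By the definition of ${}^\down$, $(a,b_0)\in R$ for every $a\in Y$, and in particular $(a_0,b_0)\in R$. Since $(X,Y)\in\mathcal{C}_N$, we have $a_0\in Y = X^{\upN} = \{a\in A\mid \text{if } (a,b)\in R \text{ then } b\in X\}$. Applying this to $b_0$ gives $b_0\in X$. Therefore $Y^\down\subseteq X$, which completes the argument.
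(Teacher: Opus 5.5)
Your proof is essentially the paper's. Both take $b_0\in Y^{\down}$ and some $a_0\in Y$, so that $(a_0,b_0)\in R$, and then use $a_0\in Y=X^{\upN}$ to get $b_0\in X$; the paper phrases this step as $Y=Y^{\downN\upN}$, concluding $b_0\in Y^{\downN}=X$. Your caveat is also right: the paper picks $a_0\in Y$ without checking $Y\neq\varnothing$, and the bottom pair $(\varnothing,\varnothing)$ satisfies all stated hypotheses while $Y^{\down}=B\not\subseteq\varnothing$, so the extra assumption $X\neq\varnothing$ (equivalent to $Y\neq\varnothing$ in a normalized context, as you show) is genuinely needed.
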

\begin{proof}
Given $b_0\in Y^\down = \{b\in B \mid  (a,b)\in R, \text{ for all } a\in Y \}$. Hence, for $a_0 \in Y$ we have that $(a_0,b_0)\in R$ and since $Y = Y^{\downN\upN}=\{a\in A \mid \mbox{ if } (a,b')\in R \mbox{ then } b'\in Y^{\downN}\}$, we have straightforwardly that $b_0\in Y^{\downN} = X$.\qed
	
\end{proof}

Note that the pairs  $(X, X^\uparrow)$ and $(Y^\downarrow, Y)$ do not necessarily have to be concepts from the original concept lattice. This happens when   $X^\up=\varnothing$ (or $Y^\down=\varnothing$) hold. In this case, we  can  find more than one concept that satisfies Proposition~\ref{prop:cp2} and Proposition~\ref{prop:cp3}, that is, there exist maximal elements satisfying these properties (equivalently with the minimal elements, when $Y^\down=\varnothing$ holds). Therefore, when we find several independent subcontexts in this situation, the top elements of the concept lattices associated with these independent subcontexts are identified by the top element of the original concept lattice (similarly,  when the condition $Y^\down=\varnothing$ holds, the bottom elements of the concept lattices are identified by the bottom element of the original concept lattice). 
In the following example, the previously introduced properties are illustrated.

\begin{example}\label{ex:factorclas}
	Let us consider a context $(A, B, R)$ where the set of attributes is given by $A=\{a_1,a_2,a_3,a_4,a_5,a_6,a_7,a_8\}$, the set of objects is $B=\{b_1,b_2,b_3,b_4,b_5,b_6,b_7\}$ and the relationship $R\colon A\times B \rightarrow \{0,1\}$ is given in Table~\ref{R:ej2}.
	
	\begin{table}[!h]
		\begin{center}
			\begin{tabular}{|c|ccccccc|}
				\hline
				$R$ & $b_1$ & $b_2$ & $b_3$ & $b_4$ & $b_5$ & $b_6$ & $b_7$\\ \hline
				$a_1$& 0 & 1 & 1 & 1 & 0 & 0 & 0 \\
				$a_2$& 0 & 0 & 0 & 1 & 0 & 0 & 0 \\
				$a_3$& 1 & 0 & 0 & 0 & 0 & 0 & 0 \\
				$a_4$& 0 & 0 & 0 & 0 & 0 & 1 & 1 \\
				$a_5$& 0 & 1 & 1 & 0 & 0 & 0 & 0 \\
				$a_6$& 0 & 0 & 0 & 0 & 1 & 1 & 0 \\
				$a_7$& 0 & 0 & 1 & 0 & 0 & 0 & 0 \\
				$a_8$& 0 & 0 & 0 & 0 & 1 & 1 & 1 \\
				\hline
			\end{tabular}
		\end{center}
		\caption{Relation of the formal context of Example~\ref{ex:factorclas}.}\label{R:ej2}
	\end{table}

First of all, we have to compute all pairs of $\mathcal{C}_N$. These pairs, except the trivial ones, are the following:
\begin{align*}
	(X_1, Y_1) = & (\{b_5,b_6,b_7\}, \{a_4,a_6,a_8\})\\ 	
	(X_2, Y_2) = & (\{b_1\}, \{a_3\})\\ 
	(X_3, Y_3) = & (\{b_2,b_3,b_4\}, \{a_1,a_2,a_5,a_7\}) \\ 
	(X_4, Y_4) = & (\{b_1,b_5,b_6,b_7\}, \{a_3,a_4,a_6,a_8\})\\
	(X_5, Y_5) = & (\{b_1,b_2,b_3,b_4\}, \{a_1,a_2,a_3,a_5,a_7\}) \\
	(X_6, Y_6) = & (\{b_2,b_3,b_4,b_5,b_6,b_7\}, \{a_1,a_2,a_4,a_5,a_7\}) 
\end{align*}

If we analyze the first pair, $(X_1, Y_1)$, we can see that $X_1^\up = \{a_8\} \neq \varnothing$ and, moreover,  $\{a_8\}^\down= \{b_5,b_6,b_7\}= X$. This means that $X_1^{\up\down} = X_1$, that is, $\langle X_1, X_1^\up\rangle$ is a formal concept as Proposition~\ref{prop:cp1} states. In particular, $\langle X_1, X_1^\up\rangle = C_8$ as it can be seen in the list of concepts shown in Figure~\ref{ex:figclas}, which also shows the concept lattice associated with the context. In this concept lattice can be checked that $C_8$  is less than the concept $C_{10}$, which is the maximum element of the lattice, and there is no intermediate concept between them, that is, the pair $\langle X_1,X_1^\up\rangle$ also satisfies Proposition~\ref{prop:cp2}. 

Now, let us check that the pair $(X_1, Y_1)$ also satisfies Proposition~\ref{prop:cp3}. For that, we need to see that $Y_1^\down \neq \varnothing$. Indeed, $Y_1^\down = \{b_6\}\neq \varnothing$, that is, $\langle Y_1^\down,Y_1\rangle = C_1$ and moreover $Y_1^\down = \{b_6\}\subseteq \{b_5,b_6, b_7\}=X$ as Proposition~\ref{prop:cp3} states. 

As a conclusion, the pair $(X_1, Y_1)$ satisfies the three properties we have presented, that is, the pair $(X_1, Y_1)$ determines one block (interval) of concepts bounded by the concepts $C_8$ and $C_1$, as can be seen in Figure~\ref{ex:figclas}.

\begin{figure}[h!]
	\begin{minipage}{0.35\textwidth}
		\begin{tabular}{l}
			$C_0 = \langle \varnothing, A\rangle $\\
			$C_1 =  \langle \{b_6\}, \{a_4,a_6,a_8\}\rangle $\\
			$C_2 =  \langle \{b_3\}, \{a_1, a_5,a_7 \}\rangle $\\
			$C_3 =  \langle \{b_5,b_6\}, \{a_6,a_8 \}\rangle $\\
			$C_4 =  \langle \{b_6,b_7\}, \{a_4,a_8 \}\rangle $\\
			$C_5 =  \langle \{b_1\}, \{a_3 \}\rangle $\\
			$C_6 =  \langle \{b_4\}, \{a_1,a_2\}\rangle $\\
			$C_7 =  \langle \{b_2,b_3\}, \{a_1,a_5 \}\rangle $\\
			$C_8 =  \langle \{b_5,b_6,b_7\}, \{a_8 \}\rangle $\\
			$C_9 = \langle \{b_2,b_3,b_4\}, \{a_1\}\rangle $\\
			$C_{10} =  \langle B, \varnothing\rangle $
		\end{tabular}
	\end{minipage}
\begin{minipage}{0.6\textwidth}
			\tikzstyle{place}=[circle,draw=black!75,fill=white!20, text width= 14pt]
			\begin{tikzpicture}[inner sep=0.75mm,scale=1.2, every node/.style={scale=0.85}]		
				\node at (0,0) (0) [place] {$C_0$};
				\node at (-2,1) (1) [place] {$C_1$};
				\node at (2.75,1.25) (2) [place] {$C_2$};
				\node at (-3,2) (3) [place] {$C_3$};
				\node at (-1,2) (4) [place] {$C_4$};
				\node at (0,2) (5) [place] {$C_5$};
				\node at (1.25,1.75) (6) [place] {$C_6$};
				\node at (2.75,2.25) (7) [place] {$C_7$};
				\node at (-2,3) (8) [place] {$C_8$};
				\node at (0,4) (9) [place] {$C_{10}$};
				\node at (2,3) (10) [place] {$C_{9}$};
				
				\draw [-] (0) -- (1)-- (3)--(8)--(4)--(1);
				\draw [-] (0) -- (5)-- (9)--(8);
				\draw [-] (0) -- (2)-- (7)--(10)--(6)--(0);
				\draw [-] (9) -- (10);
			\end{tikzpicture}
	\end{minipage}
	\caption{List of concepts and concept lattice of Example~\ref{ex:factorclas}.}
	\label{ex:figclas}
	\end{figure}
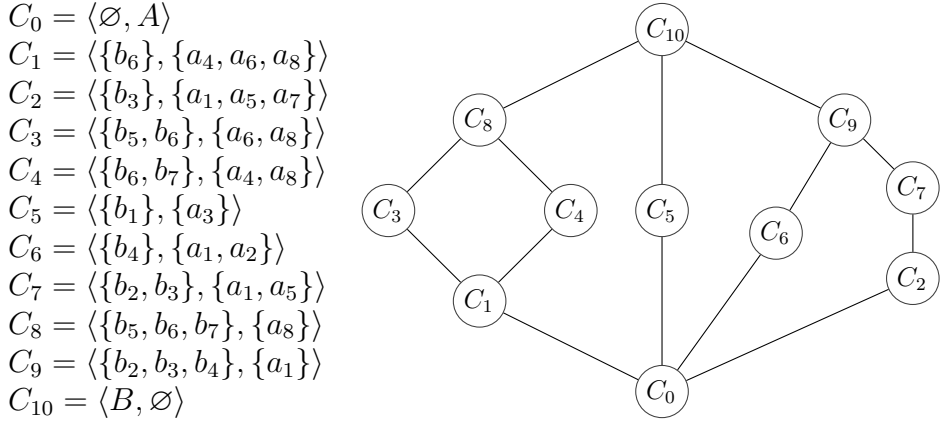
	
On the other hand, if we focus on the pair $(X_3,Y_3)$ and we follow a similar procedure to the one given for the pair $(X_1, Y_1)$, we can check that $\langle X_3,X_3^\up\rangle= \langle\{b_2,b_3,b_4\}, \{a_1\}\rangle = C_9$ and $C_9$ is less than $C_{10}$ and there is no intermediate concept between $C_9$ and $C_{10}$, that is, the pair satisfies Proposition~\ref{prop:cp1} and Proposition~\ref{prop:cp2}. However, if we compute the extent of $Y_3$ we obtain that $Y_3^\down=\varnothing$ and, therefore, $(Y_3^\down, Y_3)$ is not a concept and does not satisfy any of the presented properties. However, the pair $(X_3, Y_3)$ still defines a block of concepts comprised by $C_0$, $C_2$, $C_6$, $C_7$ and $C_9$. In other words, {the fact that the pair $(Y_3^\down, Y_3)$ is not a concept means that the bottom element of the concept lattice associated with the subcontext generated by $(X_3, Y_3)$ is  identified by the bottom element of the original concept lattice $\langle \varnothing, A\rangle$.} \qed
\end{example}

Notice that, in all the previous results, we are requiring that $X^\up\neq \varnothing$, that is,  there exists at least one attribute in the subcontext which is shared by all the objects of the considered subcontext $(X,Y,R_{X\times Y})$ where $R_{X\times Y}$ denotes the restriction of the relation $R$ to the subsets $X$ and $Y$. Equivalently, $Y^\down\neq\varnothing$ indicates the existence of at least one object that possesses all the attributes of the subcontext.

\section{Toward the factorization of contexts in  multi-adjoint frameworks}\label{Facto_f0}

We are interested in generalizing the factorization of formal contexts into independent subcontexts in the fuzzy environment provided by the multi-adjoint framework. With that goal, in this section we study how the properties {and results} presented in the previous section {can be} translated into the multi-adjoint framework. 

In order to simplify the results presented in this section, we will consider a multi-adjoint frame with only one adjoint triple. Therefore, we fix the frame {$(L_1, L_2, P,\&, \swarrow, \nwarrow)$} and  the context $(A, B, R)$, where the mapping $\sigma$  does not appear since it is not necessary when we consider a single adjoint triple. As we commented in the previous section, to find independent subcontexts in the classical case, one of the restrictions imposed on the relation of the context is that it should be normalized. The fuzzy relations of the contexts in this fuzzy environment should also keep the direct extension of this condition to the fuzzy case, i.e., there are no rows or columns with all values equal to bottom or with all values different from bottom. In this environment, we consider the pair $(g,f)$, with $g\in L_2^B$ and $f\in L_1^A$,  satisfying the equalities 
$$g^{\upN} = f \mbox{ and } f^{\downN} =g,$$
but considering  the generalized versions of the necessity operators, given in Section~\ref{version:multi}. Specifically, we will consider the set:
\[\mathcal{F}_N = \{(g,f)\mid g\in L_2^B, f\in L_1^A, f^{\downN} = g, g^{\upN} = f\}\]

 {Note that $\mathcal{F}_N\neq \varnothing$ since it has at least one element as the following lemma states.}
 {
 \begin{lemma}\label{lemma:tops}
 Given a frame  $(L_1, L_2, P,\&, \swarrow, \nwarrow)$ and a context $(A, B,R)$, then the pair $(g_\top, f_\top)$ is an element of the set $\mathcal{F}_N$.
 \end{lemma}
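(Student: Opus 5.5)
The plan is to verify the two defining equalities of $\mathcal{F}_N$ directly, pointwise, using only the boundary behaviour of the residuated implications given by the adjoint property (equivalently, items 1 and 2 of Proposition~\ref{ijar15}). Both operators ${}^{\upN}$ and ${}^{\downN}$ are defined as infima of implications whose first argument is the value of the fuzzy set. Hence it suffices to show that every implication appearing in these infima equals the corresponding top element. The infimum of a family consisting only of $\top$ is then $\top$.

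First, I will show $g_\top^{\upN}=f_\top$. For each $a\in A$ we have
$$
g_\top^{\upN}(a)=\inf\{\top_2\swarrow R(a,b)\mid b\in B\}.
$$
Here $\swarrow$ takes values in $L_1$, and its first argument ranges over the complete lattice where $g$ lives. By the adjoint property, for any $x\in L_1$ and $y=R(a,b)$ we have $x\leq_1 \top_2\swarrow y$ if and only if $x\adjoint y\leq \top_2$. The right-hand inequality always holds, so $\top_2\swarrow y=\top_1$; this is exactly item 1 of Proposition~\ref{ijar15}. Therefore every term of the infimum equals $\top_1$. Since $B\neq\varnothing$, the infimum is $\top_1$, so $g_\top^{\upN}(a)=\top_1=f_\top(a)$ for all $a\in A$. (Even if $B$ were empty, the empty infimum would also be $\top_1$.)

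Second, I will show $f_\top^{\downN}=g_\top$. For each $b\in B$ we have
$$
f_\top^{\downN}(b)=\inf\{\top_1\nwarrow R(a,b)\mid a\in A\}.
$$
The same argument with the other half of the adjoint property applies: for $y\in L_2$ and $x=R(a,b)$, the condition $y\leq_2 \top_1\nwarrow x$ is equivalent to $x\adjoint y\leq \top_1$, which always holds. Hence $\top_1\nwarrow x=\top_2$, as in item 2 of Proposition~\ref{ijar15}. So $f_\top^{\downN}(b)=\top_2=g_\top(b)$.

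Combining the two steps gives $(g_\top,f_\top)\in\mathcal{F}_N$. The only point needing care, and the main obstacle, is bookkeeping. The paper defines ${}^{\upN}$ in the object-oriented frame and ${}^{\downN}$ in the property-oriented frame, which place $L_1$, $L_2$ and $P$ in different slots of the triple. So I will check that the first argument of each implication lives in the lattice whose top is being used, namely $\top_2$ for $\swarrow$ in ${}^{\upN}$ and $\top_1$ for $\nwarrow$ in ${}^{\downN}$. I will then invoke the adjoint property directly in that configuration rather than rely on the particular poset ordering in Proposition~\ref{ijar15}. Once the arguments are correctly placed, each computation is a one-line consequence of the adjoint property together with the fact that the target of $\adjoint$ is bounded above by the relevant top.
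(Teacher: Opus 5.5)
Your proof is correct and is essentially the paper's argument: a pointwise computation showing that every implication $\top_2\swarrow R(a,b)$ appearing in $g_\top^{\upN}(a)$ equals $\top_1$ (item 1 of Proposition~\ref{ijar15}), with the ${}^{\downN}$ equality handled the same way via $\top_1\nwarrow R(a,b)=\top_2$. The only difference is that the paper cites Proposition~\ref{ijar15} and calls the second equality analogous, whereas you derive both boundary identities directly from the adjoint property and check the different typings of the triple in the object- and property-oriented frames; this is extra care, not a different route.
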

 \begin{proof}
 We have to prove that the pair $(g_\top, f_\top)$ satisfies that  $f_\top^{\downN} = g_\top$ and $g_\top^{\upN} = f_\top$.  We will prove that $g_\top^{\upN} = f_\top$, the other equality is obtained analogously. For every $a\in A$, we have that:
 \begin{align*}
	 g_\top^{\upN} (a) 
	&=\inf\{ g_\top(b)\swarrow R(a,b) \mid b\in B \}\\ 
	&= \inf\{ \top_2 \swarrow R(a,b) \mid b\in B \}\\
	&\overset{(*)}{=} \inf\{ \top_1 \mid b\in B \}\\
	&= \top_1\\
	&= f_\top(a)
\end{align*}
Note that $(*)$ holds by Proposition~\ref{ijar15}. Therefore, $g_\top^{\upN} = f_\top$. \qed
 \end{proof}
 }
 
{However, one might think that the pair $(g_\bot, f_\bot)$ is also an element of $\mathcal{F}_N$, but this is not true in general as the following example illustrates.}

\begin{example}\label{ex:botFn}
Let us consider the frame $([0,1],[0,1],[0,1], \leq, \leq, \leq, \&_{\text{\L}})$, where $\&_{\text{\L}}$ is the \L ukasiewicz conjunctor defined for every $x,y\in [0,1]$ as:
$$ x\&_{\text{\L}}y = \max\{0, x+y-1\}$$
The corresponding residuated implications $\swarrow^{\text{\L}}, \nwarrow_{\text{\L}} : [0,1]\times [0,1] \rightarrow [0,1]$ are defined  for every $y,z\in [0,1]$ as:
$$ z\swarrow^{\text{\L}} y = z\nwarrow_{\text{\L}} y=  \min\{1, 1-y+z\}$$

\begin{table}[!h]
				\begin{center}
				\begin{tabular}{|c|cc|}
					\hline
					$R$ & $b_1$ & $b_2$ \\ \hline
					$a_1$& 0.5 & 0  \\
					$a_2$ & 0 & 0.75 \\
					\hline
				\end{tabular}
			\end{center}
		\caption{Fuzzy relation $R$ of Example~\ref{ex:botFn}.}\label{tabla:botFn}
	\end{table}

In addition, we consider the context $(A,B,R)$ where $A=\{a_1,a_2\}$, $B=\{b_1,b_2\}$ and the fuzzy relation $R$ is given in Table~\ref{tabla:botFn}. Now, in order to show that $(g_\bot, f_\bot)$ does not belong to $\mathcal{F}_N$, it is sufficient to verify that $g_\bot^{\upN} \neq f_\bot$. Thus, we apply the operator ${}^{\upN}$ to $g_\bot$, for every $a\in A$, that is:

 \begin{align*}
	 g_\bot^{\upN} (a_1)  
	&=\inf\{ g_\bot(b)\swarrow^{\text{\L}} R(a_1,b) \mid b\in B \} \\
	&=  \inf\{ 0 \swarrow^{\text{\L}} 0.5,  0\swarrow^{\text{\L}} 0 \} \\
	&= \inf\{0.5, 1\}\\
	& = 0.5\\
	g_\bot^{\upN} (a_2)  
	&=\inf\{ g_\bot(b)\swarrow^{\text{\L}} R(a_2,b) \mid b\in B \} \\
	&=  \inf\{ 0 \swarrow^{\text{\L}} 0,  0\swarrow^{\text{\L}} 0.75 \} \\
	& = \inf\{1, 0.25\}\\
	& = 0.25
\end{align*}
Clearly, $g_\bot^{\upN} 
\neq 
 f_\bot$. Therefore, we have that $(g_\bot, f_\bot)\not\in\mathcal{F}_N$.\qed
\end{example}

 On the other hand, it is known that  the set $\mathcal{C}_N$ has the structure of a complete lattice, as we commented in the previous {section}. The following result proves that the fuzzy extension of $\mathcal{C}_N$, that is,  the set $\mathcal{F}_N$ also has the structure of a complete lattice.

\begin{proposition}
	Let  $(L_1, L_2, P,\&, \swarrow, \nwarrow)$ be a frame and let $(A, B,R)$ be a context. Then, the elements in $\mathcal{F}_N$ form a complete lattice with the ordering $(g_1, f_1) \leq (g_2, f_2)$ if and only if $g_1\preceq_2 g_2$, or equivalently, if and only if $f_1 \preceq_1 f_2$. 
\end{proposition}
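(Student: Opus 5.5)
The plan is to identify $\mathcal{F}_N$ with the set of fixed points of a closure-type operator on the complete lattice $L_2^B$. Then I will use the Knaster--Tarski theorem, or the standard fact that the closed elements of a closure system form a complete lattice.

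First, I will check that ${}^{\upN}\colon L_2^B\to L_1^A$ and ${}^{\downN}\colon L_1^A\to L_2^B$ are both order-preserving.
\begin{itemize}
\item For ${}^{\upN}$, use $g^{\upN}(a)=\inf_b\{g(b)\swarrow R(a,b)\}$ together with the monotonicity of $\swarrow$ in its first argument. This monotonicity follows from the adjoint property: if $z_1\leq z_2$ and $x\leq z_1\swarrow y$, then $x\adjoint y\leq z_1\leq z_2$, so $x\leq z_2\swarrow y$.
\item The same argument applied to $\nwarrow$ gives the monotonicity of ${}^{\downN}$.
\end{itemize}
Hence the composite $\varphi(g)=g^{\upN\downN}$ is a monotone map on the complete lattice $L_2^B$. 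Since the operations are defined pointwise, $L_2^B$ is complete.

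Next, I will observe that $(g,f)\in\mathcal{F}_N$ if and only if $\varphi(g)=g$ and $f=g^{\upN}$.
\begin{itemize}
\item If $\varphi(g)=g$, set $f=g^{\upN}$. Then $f^{\downN}=g$ holds by construction.
\item It remains to check that $g^{\upN}=f$, which is immediate from the definition of $f$.
\end{itemize}
So $g\mapsto(g,g^{\upN})$ is a bijection from $\mathrm{Fix}(\varphi)$ onto $\mathcal{F}_N$. Its inverse is the first projection. By Knaster--Tarski, $\mathrm{Fix}(\varphi)$ is a complete lattice under the order $\preceq_2$, and transporting that order along the bijection gives the lattice structure on $\mathcal{F}_N$. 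Lemma~\ref{lemma:tops} ensures nonemptiness and identifies the top element $(g_\top,f_\top)$.

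Finally, I will verify the equivalence of the two orderings on $\mathcal{F}_N$.
\begin{itemize}
\item If $g_1\preceq_2 g_2$, then $f_1=g_1^{\upN}\preceq_1 g_2^{\upN}=f_2$ by monotonicity of ${}^{\upN}$.
\item Conversely, if $f_1\preceq_1 f_2$, then $g_1=f_1^{\downN}\preceq_2 f_2^{\downN}=g_2$ by monotonicity of ${}^{\downN}$.
\end{itemize}
So ordering by the first or by the second component gives the same order.

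The only delicate point is that suprema in $\mathcal{F}_N$ need not be pointwise. Unlike the classical case with Lemma~\ref{lemma:complement}, we do not have a complementation or union-preservation result here. The fixed-point argument avoids needing explicit formulas. If explicit meets are desired, I would try to show they are computed as $\bigwedge_i(g_i,f_i)=\bigl((\bigwedge_i f_i)^{\downN},\,(\bigwedge_i f_i)^{\downN\upN}\bigr)$ using Proposition~\ref{ijar15}(3), but I will rely only on Knaster--Tarski so that the proof does not depend on this.
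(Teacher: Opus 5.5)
Your proof is correct, but it follows a different route from the paper's.

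The paper does not pass through fixed points. It checks directly that $\mathcal{F}_N$ is closed under arbitrary \emph{componentwise} meets, $(\bigwedge_{i} g_i,\bigwedge_{i} f_i)\in\mathcal{F}_N$. For this it uses Proposition~\ref{ijar15}(3): the implications preserve infima in their first argument, so ${}^{\upN}$ and ${}^{\downN}$ preserve arbitrary infima. It then adds the top $(g_\top,f_\top)$ from Lemma~\ref{lemma:tops}.

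Your Knaster--Tarski argument needs only the monotonicity of ${}^{\upN}$ and ${}^{\downN}$, so it uses weaker hypotheses. You also verify that ordering by the first component and by the second component agree, which the paper takes for granted.

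What the paper's route gives in return is concrete structure. Meets in $\mathcal{F}_N$ are computed pointwise in $L_2^B\times L_1^A$, so $\mathrm{Fix}(\varphi)$ is a complete meet-subsemilattice of $L_2^B$, not merely an abstract complete lattice. In particular, your tentative formula $\bigl((\bigwedge_i f_i)^{\downN},(\bigwedge_i f_i)^{\downN\upN}\bigr)$ reduces to $(\bigwedge_i g_i,\bigwedge_i f_i)$ by exactly this infimum preservation.

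One correction to your framing: the map $\varphi\colon g\mapsto g^{\upN\downN}$ is not a closure operator in general. It need not be extensive, because ${}^{\upN}$ and ${}^{\downN}$ do not form a Galois connection. Already in the Boolean case, take $R=\{(a,b_1),(a,b_2),(a',b_3)\}$ and $X=\{b_1\}$: then $X^{\upN\downN}=\varnothing$. So the ``closed elements of a closure system'' alternative you mention is not available.

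What is true is that $\varphi$ preserves arbitrary meets, which is the paper's argument in another form. Since your proof actually relies only on Knaster--Tarski, this remark does not affect its validity.
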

\begin{proof}
	Let us consider a family of pairs $\{(g_i, f_i)\}_{i\in I}\subseteq\mathcal{F}_N$, with $I$ an index set. Now, we will prove $(\bigwedge_{i\in I} g_i, \bigwedge_{i\in I} f_i) \in \mathcal{F}_N$.

	\begin{align*}
	\left( \bigwedge_{i\in I}g_i \right)^{\upN}(a) 
	&= \inf\{  \left(\bigwedge_{i\in I} g_i(b)\right) \swarrow^{} R(a,b)\mid b\in B\}\\ 
	&\overset{(*)}{=} \inf\{  \bigwedge_{i\in I} (g_i(b) \swarrow^{} R(a,b) )\mid b\in B\}\\
	&= \bigwedge_{i\in I} \left( \inf\{  g_i(b) \swarrow^{} R(a,b) \mid b\in B\}\right)\\ 
	&= \bigwedge_{i\in I} g_i^{\upN}(a) \\
	&=  \bigwedge_{i\in I} f_i(a) 
	\end{align*}
	Note that $(*)$ holds by Proposition~\ref{ijar15}.

	Analogously, we obtain that $ (\bigwedge_{i\in I} f_i )^{\downN}=  \bigwedge_{i\in I} g_i$. Moreover, $(\bigwedge_{i\in I} g_i, \bigwedge_{i\in I} f_i)$ clearly is the greatest lower bound of $\{(g_i, f_i)\}_{i\in I}$.
	Therefore,   
$$
\bigwedge_{i\in I} (g_i, f_i) = \left(\bigwedge_{i\in I} g_i, \bigwedge_{i\in I} f_i\right) \in \mathcal{F}_N
$$
	Furthermore,  by Lemma~\ref{lemma:tops}, $\mathcal{F}_N$ has a top element, this is $(g_\top, f_\top) \in \mathcal{F}_N$ 
	Thus, $\mathcal{F}_N$ forms a complete lattice.\qed
\end{proof}

	Although we have seen that the elements of $\mathcal{C}_N$ and $\mathcal{F}_N$ have the same algebraic structure, the  properties satisfied by the elements of $\mathcal{C}_N$ and $\mathcal{F}_N$ have remarkable differences. For example, in the classical case,  the non-existence of independent blocks entails that the set $\mathcal{C}_N$ only contains the trivial pairs. However, in the fuzzy case, even  when there are no independent subcontexts  the set $\mathcal{F}_N$ may have more elements. This fact is illustrated in the following example.

\begin{example}\label{ex:nordenfg}
		{Consider} the {multi-adjoint} frame 
		$([0, 1]_4, [0, 1]_4, [0, 1]_4, \leq, \leq, \leq, \&^*_{\text{P}})$
		where 
		$\&^*_{\text{P}}$ is the discretization of the product conjunctor (Example~\ref{ex.non.commut}).

		The context is given by the set of attributes $A =\{a_1, a_2, a_3\}$, the set of objects $B = \{b_1, b_2, b_3\}$ and the relation $R_1\colon A\times B \rightarrow [0,1]_4$ shown on the left side of Figure~\ref{ex:fig}.

		\begin{figure}[!h]
			\begin{minipage}{0.55\textwidth}
				\begin{center}
					\begin{tabular}{|c|ccc|}
						\hline
						$R_1$ & $b_1$ & $b_2$ & $b_3$ \\ \hline
						$a_1$& 0.5 & 0.5 & 1 \\
						$a_2$ & 0.25 & 1 & 0\\
						$a_3$ & 0 & 0.75 & 0.25\\
						\hline
					\end{tabular}
				\end{center}
			\end{minipage}
			\begin{minipage}{0.35\textwidth}
				\includegraphics[scale=0.12]{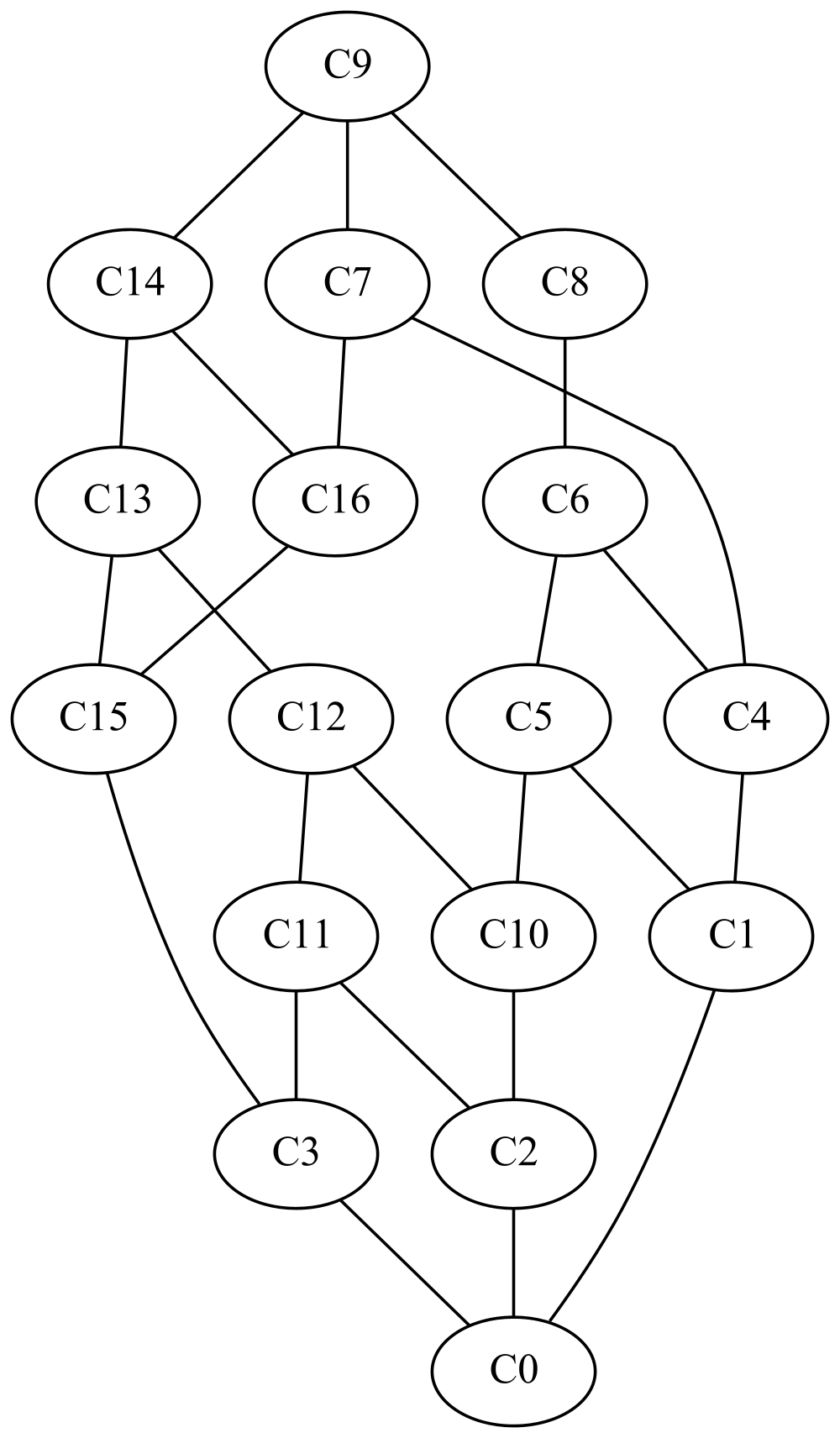}
			\end{minipage}
			\caption{Fuzzy relation $R_1$ and concept lattice of Example~\ref{ex:nordenfg}.}\label{ex:fig}
			\end{figure}

			We can observe that  non-independent  blocks  of concepts exist, as it can be checked in the concept lattice shown in Figure~\ref{ex:fig}.  In this case, the set $\mathcal{F}_N$  contains several elements, despite not containing independent subcontexts. For instance, the pair $(g, f) = ( \{ b_1/1, b_2/0.5, b_3/0.5 \}, \{ a_1/0.5, a_2/0.5, a_3/0.5 \})$  satisfies that $g^{\upN} = f$ and $f^{\downN} = g$, that is, it is an element of $\mathcal{F}_N$.	
		Consequently, the existence of pairs in $\mathcal{F}_N$ does not determine independent blocks of concepts, that is, independent subcontexts.

Notice {that the} fuzzy relation $R_1$ is not normalized because the first  row has all the values different from zero. 
Nevertheless, if the fuzzy relation of the context {were} normalized, it {would be} possible to find these independent blocks of concepts, as it happens in the context $(A, B, R_2)$ and it is shown in  Figure~\ref{ex:fig2}. 	

					\begin{figure}[!h]
			\begin{minipage}{0.55\textwidth}
				\begin{center}
					\begin{tabular}{|c|ccc|}
						\hline
						$R_2$ & $b_1$ & $b_2$ & $b_3$ \\ \hline
						$a_1$& 0.5 & 0 & 1 \\
						$a_2$ & 0 & 0.5 & 0\\
						$a_3$ & 0.75 & 0 & 0.25\\
						\hline
					\end{tabular}
				\end{center}
			\end{minipage}
			\begin{minipage}{0.35\textwidth}
				\includegraphics[scale=0.1]{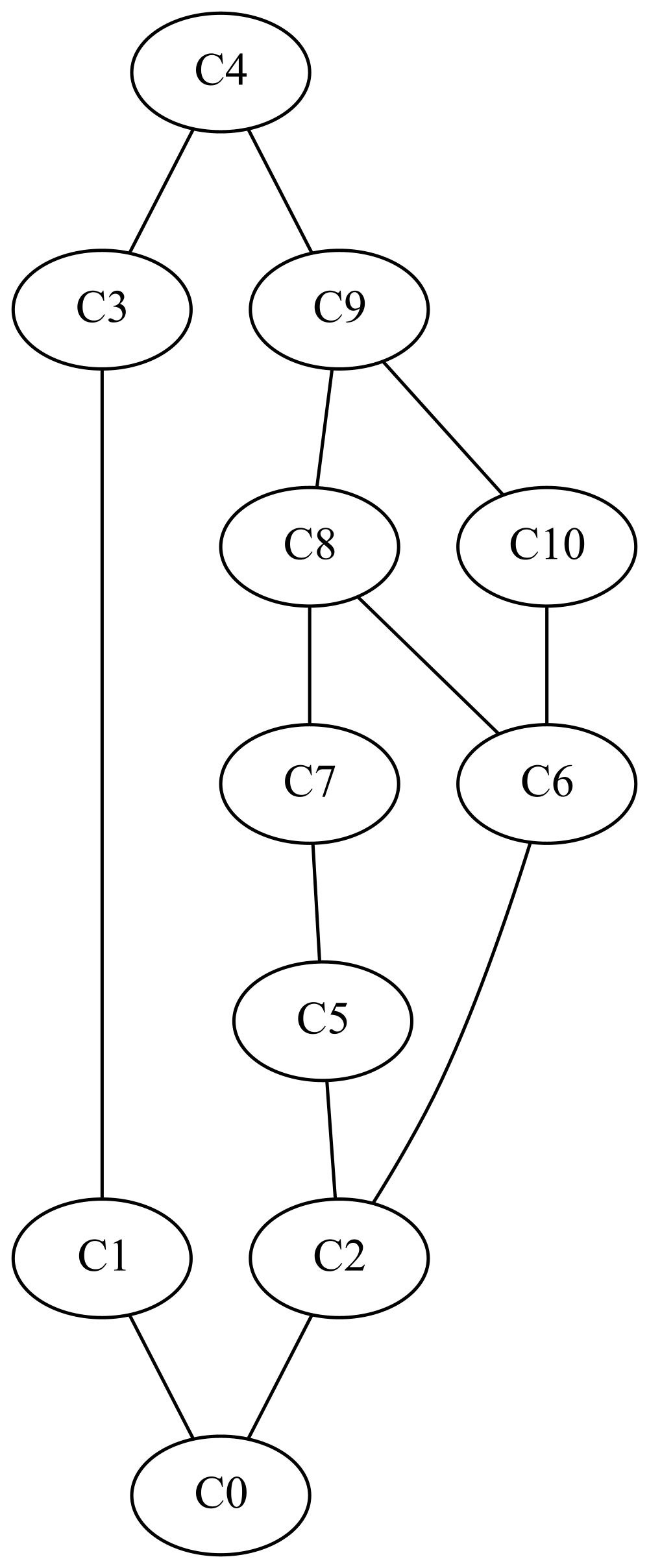}
			\end{minipage}
			\caption{Fuzzy relation $R_2$ and concept lattice of Example~\ref{ex:nordenfg}.}\label{ex:fig2}
		\end{figure}

		\begin{table}[!h]
			\begin{center}
				\begin{tabular}{l}
					$(g_\bot, f_\bot) = (\{ b_1/0, b_2/0, b_3/0\}, \{ a_1/0, a_2/0, a_3/0 \})$ \\
					$(g_1, f_1) = (\{ b_1/0, b_2/1 b_3/0\}, \{ a_1/0, a_2/1, a_3/0 \})$ \\
					$(g_2, f_2) = (\{ b_1/0.25, b_2/0, b_3/0.25 \}, \{ a_1/0.25, a_2/0, a_3/0.25 \})$  \\
					$(g_3, f_3) = (\{ b_1/0.25, b_2/0, b_3/0.5 \}, \{ a_1/0.5, a_2/0, a_3/0.25 \})$ \\
					$(g_4, f_4) = (\{ b_1/0.5, b_2/0, b_3/0.25 \}, \{ a_1/0.25, a_2/0, a_3/0.5 \})$ \\
					$(g_5, f_5) = (\{ b_1/0.5, b_2/0, b_3/1 \}, \{ a_1/1, a_2/0, a_3/0.5 \})$ \\
					$(g_{6}, f_{6}) = (\{ b_1/0.5, b_2/1, b_3/0.25 \}, \{ a_1/0.25, a_2/1, a_3/0.5 \})$ \\
					$(g_{7}, f_{7}) = (\{ b_1/0.5, b_2/1, b_3/0.5 \}, \{ a_1/0.5, a_2/1, a_3/0.5 \})$ \\
					$(g_{8}, f_{7}) = (\{ b_1/0.5, b_2/1, b_3/0.75 \}, \{ a_1/0.75, a_2/1, a_3/0.5 \})$ \\
					$(g_{9}, f_{9}) = (\{ b_1/0.5, b_2/1, b_3/1 \}, \{ a_1/1, a_2/1, a_3/0.5 \})$ \\
					$(g_{10}, f_{10}) = (\{ b_1/1, b_2/0, b_3/1 \}, \{ a_1/1, a_2/0, a_3/1\})$ \\
					$(g_\top, f_\top) = (\{ b_1/1, b_2/1, b_3/1 \}, \{ a_1/1, a_2/1, a_3/1 \})$ 
				\end{tabular}
			\end{center}
			\caption{Some pairs of $\mathcal{F}_N$ in the context $(A, B,R_2)$ of Example~\ref{ex:nordenfg}.}\label{ex:fn2}
		\end{table}	

The set $\mathcal{F}_N$ in the context $(A, B,R_2)$ consists of 20 pairs, some of them are listed in Table~\ref{ex:fn2}.  
For instance, the pair $(g_{10},f_{10})$ 
characterizes the block delimited by the concepts $C_2$ and $C_9$ shown in the concept lattice in Figure~\ref{ex:fig2}. However, not all of them provide independent blocks, which raises the need to carry out the study  of more properties of the set $\mathcal{F}_N$. \qed

\end{example}

The following result shows the relation between the possibility and necessity operators applied to the same fuzzy set of objects $g$ of a considered pair $(g,f)\in\mathcal{F}_N$.

\begin{proposition}\label{prop:fp1}
	Given a frame $(L_1, L_2, P,\&, \swarrow, \nwarrow)$, a context $(A, B, R)$ and a pair $(g,f)\in\mathcal{F}_N$, we have that ${g}^{\upPi}(a) \preceq_1 {g}^{\upN}(a)$, for all $a\in A$.
\end{proposition}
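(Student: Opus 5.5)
The plan is to reduce the inequality to a pointwise statement for each object $b\in B$ and then use the adjoint property of Definition~\ref{def:adjoint}. Since $(g,f)\in\mathcal{F}_N$, we have $g^{\upN}=f$. So it suffices to prove ${g}^{\upPi}(a)\preceq_1 f(a)$ for every $a\in A$. By the definition of the possibility operator, ${g}^{\upPi}(a)=\sup\{R(a,b)\adjoint g(b)\mid b\in B\}$. Hence, by the definition of supremum, it is enough to show
\[
R(a,b)\adjoint g(b)\preceq_1 f(a)\quad\text{for all } a\in A,\ b\in B.
\]

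For this I will use the other equality defining $\mathcal{F}_N$, namely $g=f^{\downN}$. By the definition of ${}^{\downN}$, we have $g(b)=\inf\{f(a')\nwarrow R(a',b)\mid a'\in A\}$. Since the infimum is a lower bound, taking $a'=a$ gives
\[
g(b)\preceq_2 f(a)\nwarrow R(a,b).
\]
Now apply the adjoint property $y\preceq_2 z\nwarrow x$ iff $x\adjoint y\preceq_1 z$ with $x=R(a,b)$, $y=g(b)$ and $z=f(a)$. This yields $R(a,b)\adjoint g(b)\preceq_1 f(a)$, which is exactly the pointwise inequality needed. Taking the supremum over $b\in B$ then gives ${g}^{\upPi}(a)\preceq_1 f(a)={g}^{\upN}(a)$.

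The only delicate point is bookkeeping of types. The operator ${}^{\upPi}$ comes from the property-oriented frame, with $\adjoint\colon P\times L_2\to L_1$ and $\nwarrow\colon L_1\times P\to L_2$. The operator ${}^{\upN}$ comes from the object-oriented frame. I would state that the single adjoint triple of the fixed frame is read so that both operators are defined on the same fuzzy sets, and that the adjoint property is applied in the orientation matching $\adjoint$ and $\nwarrow$ as they appear in ${}^{\upPi}$ and ${}^{\downN}$. Once this is fixed, the argument is just the standard fact that $g\preceq f^{\downN}$ implies $g^{\upPi}\preceq f$ for the isotone Galois connection $({}^{\upPi},{}^{\downN})$. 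Indeed, one could simply cite that Galois connection: $g=f^{\downN}$ gives $g^{\upPi}=f^{\downN\upPi}\preceq f=g^{\upN}$.
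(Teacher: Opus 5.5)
Your proof is correct and uses the same key step as the paper: the adjoint property of the property-oriented triple turns $g(b)\preceq_2 f(a)\nwarrow R(a,b)$ into $R(a,b)\adjoint g(b)\preceq_1 f(a)$, and then you take a supremum over $b$. The only difference is that the paper substitutes $f=g^{\upN}$ inside $g=g^{\upN\downN}$, distributes $\nwarrow$ over the inner infimum (Proposition~\ref{ijar15}), and re-folds with an infimum over $b'$; keeping $f$ as a named object, as you do, avoids that detour.
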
 
\begin{proof}
	Given $(g,f)\in\mathcal{F}_{N}$, and $b\in B$, we have that the expression:
	
	$$
	{g}(b) = {g}^{\upN\downN}(b) = \inf\{\inf\{g(b')\swarrow R(a',b') \mid b'\in B\}\nwarrow R(a',b) \mid a'\in A\}
	$$
	Therefore, by Proposition~\ref{ijar15}, we have that $g(b) = \inf\{(g(b')\swarrow R(a',b'))\nwarrow R(a',b)\mid a'\in A, b'\in B\}$. 
	Thus, given  $a'\in A$ and $b'\in B$ 
$$
g(b) \preceq_1  (g(b')\swarrow R(a',b'))\nwarrow R(a',b)
$$
 and applying the adjoint property (Equivalence~(\ref{ap})) we obtain that 
$$
 R(a',b)\& g(b) \preceq_1 g(b')\swarrow R(a',b')
$$ 
Now, we can apply the supremum on the left argument of the previous inequality and infimum in the right argument, obtaining that: 

$$
 \sup\{R(a',b)\& g(b)\mid b\in B\} \preceq_1 \inf\{g(b')\swarrow R(a',b') \mid b'\in B\}
 $$ 
 
 Therefore, ${g}^{\upPi}(a') \preceq_1 {g}^{\upN}(a'),$ for all $a' \in A$.\qed

\end{proof}

The following property relates the closure operator that arises from the composition of the mappings of the pair $(^{\upPi}, ^{\downN})$ to the composition of the necessity operators $^{\upN}$ and $^{\downN}$.

\begin{proposition}\label{prop:fp2}
	Let $(L_1, L_2, P,\&, \swarrow, \nwarrow)$ be a frame, $(A, B, R)$ be a context and $(g,f)\in\mathcal{F}_N$. Then, ${g}^{\upPi\downN}${$= g$, and therefore $\langle g, {g}^{\upPi}\rangle\in \mathcal{M}_{\pi N}$.}
\end{proposition}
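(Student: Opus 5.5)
Mimicking the classical Proposition~\ref{prop:cp0}, the natural route is a sandwich argument. Since $(^{\upPi},^{\downN})$ is an isotone Galois connection, $^{\upPi\downN}$ is a closure operator, so $g\preceq_2 g^{\upPi\downN}$ holds for every $g\in L_2^B$. Only the reverse inequality $g^{\upPi\downN}\preceq_2 g$ needs to be established.

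For the reverse inequality I would use Proposition~\ref{prop:fp1}. Because $(g,f)\in\mathcal{F}_N$, it gives $g^{\upPi}\preceq_1 g^{\upN}=f$ pointwise on $A$. The necessity operator $^{\downN}$ is monotone: $f^{\downN}(b)=\inf\{f(a)\nwarrow R(a,b)\mid a\in A\}$, and $\nwarrow$ is isotone in its first argument by the adjoint property. Hence
\[
g^{\upPi\downN}\preceq_2 f^{\downN} = g ,
\]
where the equality is the defining condition $f^{\downN}=g$ of $\mathcal{F}_N$. Combining the two inequalities gives $g^{\upPi\downN}=g$.

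For the second claim, set $h=g^{\upPi}$. Then $\langle g,h\rangle$ satisfies $g^{\upPi}=h$ by definition, and $h^{\downN}=g^{\upPi\downN}=g$ by what was just shown. So $\langle g,g^{\upPi}\rangle$ is a property-oriented concept, i.e.\ an element of $\mathcal{M}_{\pi N}$.

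The only real subtlety is the bookkeeping of the frames. Both the property-oriented operators $^{\upPi},^{\downN}$ and the object-oriented operator $^{\upN}$ are applied to the same context, and $^{\downN}$ must be the same map in $\mathcal{F}_N$ and in the property-oriented Galois connection. The paper fixes a single adjoint triple and uses it in both, so I will take this as given. The extensivity of the closure $g\preceq_2 g^{\upPi\downN}$ then follows from the isotone Galois connection stated in Section~\ref{version:multi}, and monotonicity of $^{\downN}$ follows directly from the adjoint property.
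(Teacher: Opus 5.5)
Your proposal is correct and matches the paper's proof: you sandwich $g\preceq_2 g^{\upPi\downN}\preceq_2 g^{\upN\downN}=g$, using extensivity of the closure for the first inequality and Proposition~\ref{prop:fp1} plus monotonicity of $^{\downN}$ for the second. You rightly point out that the same $^{\downN}$ must serve in both $\mathcal{F}_N$ and the property-oriented connection, and the paper assumes this without comment as well.
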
 
\begin{proof}
		By Proposition~\ref{prop:fp1}, we have that ${g}^{\upPi} \preceq_1 {g}^{\upN}$. If we apply the operator $^{\downN}$, we have that ${g}^{\upPi\downN} \preceq_2 {g}^{\upN\downN}$, since $^{\downN}$ is order preserving. In addition, we have that $g \preceq_2  {g}^{\upPi\downN}$, since the composition of the mappings in the pair $(^{\upPi}, ^{\downN})$ is a closure operator and $ {g}^{\upN\downN} = g$, by hypothesis. Therefore, we obtain that
\[
g \preceq_2  {g}^{\upPi\downN}\preceq_2 {g}^{\upN\downN} = g
\]

Thus, we can conclude that the pair $\langle g, {g}^{\upPi}\rangle$ is a concept of the property-oriented concept lattice.
\qed
\end{proof}

Hence, the previous result asserts that the pair $\langle g, {g}^{\upPi}\rangle$ is always a property-oriented concept, whenever the pair $(g, f)$ is in $\mathcal{F}_N$, as in the Boolean  case. Nevertheless, the extension of Proposition~\ref{prop:cp1} to the fuzzy framework does not hold, that is, the pair $( g, {g}^{\up})$ is not a concept in general, as the following example shows.

\begin{example}
Considering again the context $(A,B,R_2)$ of Example~\ref{ex:nordenfg}, we focus on the pair of $\mathcal{F}_N$:
\[(g_2, f_2) = (\{ b_1/0.25, b_2/0, b_3/0.25 \}, \{ a_1/0.25, a_2/0, a_3/0.25 \})\]
Applying the operator ${}^\up$ to the fuzzy subset $g_2$, we obtain that:
\[(g_2, g_2^{\up})=(\{ b_1/0.25, b_2/0, b_3/0.25 \}, \{ a_1/1, a_2/0, a_3/1 \})\]
However,  although $g_2^{\up}$ is different from $g_{\bot}$, if we compute $g_2^{\up\down}$ we have that
$g_2^{\up\down}=\{ b_1/0.5, b_2/0, b_3/0.5 \}$.
Therefore, we can conclude that $g_2^{\up\down}\neq g_2$ and, as a consequence, the pair $( g_2, {g_2}^{\up})$ is not a concept.\qed
\end{example}

As a consequence,  the pairs $(g,f)$ in $ \mathcal F_N$ do not provide in general a concept   in $\mathcal M$ either with extent  $g$ or intent $f$. Therefore,  Proposition~\ref{prop:cp2} and Proposition~\ref{prop:cp3} cannot be extended to the multi-adjoint frame either.  

Due to these properties given in the Boolean case are not satisfied in the fuzzy framework, it is necessary to set some assumptions which allow us to obtain good properties related to the factorization of fuzzy contexts.

\subsection{$\top$-normalized contexts}

From now on, we will consider a framework for which interesting results about the factorization of contexts in the fuzzy case can be obtained. Specifically, we will consider the following kind of formal context.

\begin{definition}\label{def.tnomal} 
A normalized context $(A, B, R)$,  satisfying that for every $a\in A$, there exist $b_a\in B$ such that $R(a, b_a)=\top_P$ is called \emph{$\top$-normalized context}. 
\end{definition}

Notice that, the assumption on $R$  means that the fuzzy subsets associated with every row of $R$ be a normal fuzzy subset, in this case, we will say that $R$ is \emph{normal by rows}. This fact  is not restrictive because, for example, if we are handling  real numbers, it is usual to normalize (divide by the greatest element of the row) in order to obtain values in the unit interval, if they are not in this interval, or to extend them, if they are mainly concentrated in a part of the unit interval.

Moreover, we will consider  the frame $([0, 1], \leq, \&_{\text{G}})$ where $\&_{\text{G}}$ is the G\"odel conjunctor, that is, the operator defined for every $x,y\in [0,1]$ as:
$$
x \&_{\text{G}} y = \min\{ x,y\}
$$
The corresponding residuated implications $\swarrow^{\text{G}}, \nwarrow_{\text{G}} : [0,1]\times [0,1] \rightarrow [0,1]$ are defined  for every $y,z\in [0,1]$ as:
$$ z\swarrow^{\text{G}} y = z\nwarrow_{\text{G}} y=  \displaystyle\left\{\begin{array}{ll}
		1 & \mbox{ if } y\leq z\\
		\displaystyle z & \mbox{ otherwise }
		\end{array}\right.
$$
	
Under these new hypotheses, the opposite inequality to the one given in Proposition~\ref{prop:fp1} also holds, as the following result shows.
\begin{proposition}\label{prop:fp3}
	Given the frame $([0,1],\leq, \&_\text{G})$, a $\top$-normalized context $(A, B, R)$ and a pair $(g,f)\in\mathcal{F}_N$, 
	it is satisfied that ${g}^{\upN} \leq {g}^{\upPi}$. 
\end{proposition}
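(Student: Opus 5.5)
The plan is to compare both sides pointwise, at a fixed attribute $a\in A$, through a single witness object supplied by the $\top$-normalization hypothesis. No structural results on $\mathcal{F}_N$ are needed: the inequality should hold for every $g\in[0,1]^B$, and membership $(g,f)\in\mathcal{F}_N$ is only part of the context in which the statement is used.

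Fix $a\in A$. By Definition~\ref{def.tnomal}, $R$ is normal by rows, so there is an object $b_a\in B$ with $R(a,b_a)=1$.

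First, bound the necessity side from above. Since $g^{\upN}(a)$ is an infimum over $b\in B$, it is at most the term indexed by $b_a$:
\[
g^{\upN}(a)=\inf\{g(b)\swarrow^{\text{G}} R(a,b)\mid b\in B\}\le g(b_a)\swarrow^{\text{G}} 1 .
\]
For the G\"odel implication, $z\swarrow^{\text{G}}1=z$ for every $z\in[0,1]$. Indeed, if $1\le z$ then $z=1$ and the value is $1=z$, and otherwise the value is $z$ by definition. Hence $g^{\upN}(a)\le g(b_a)$.

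Second, bound the possibility side from below. Since $g^{\upPi}(a)$ is a supremum over $b\in B$, it is at least the term indexed by $b_a$:
\[
g^{\upPi}(a)=\sup\{R(a,b)\&_{\text{G}} g(b)\mid b\in B\}\ge \min\{1,g(b_a)\}=g(b_a).
\]
Chaining the two bounds gives $g^{\upN}(a)\le g(b_a)\le g^{\upPi}(a)$. Since $a$ was arbitrary, $g^{\upN}\le g^{\upPi}$.

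Combined with Proposition~\ref{prop:fp1}, this also yields the equality $g^{\upN}=g^{\upPi}$ for pairs in $\mathcal{F}_N$, which is presumably how the result will be used afterwards.

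I expect no real obstacle. The only points needing care are the identity $z\swarrow^{\text{G}}1=z$, which depends on the particular Gödel residuum, and noticing that the normality condition must be applied to the row of the fixed attribute $a$, not to a column.
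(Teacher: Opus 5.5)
Your proof is correct and follows the paper's argument: both use a witness $b'$ with $R(a,b')=1$ and the chain $g^{\upN}(a)\le g(b')\le g^{\upPi}(a)$; the paper splits into the cases $R(a,b')\le g(b')$ and $R(a,b')\nleq g(b')$, and your identity $z\swarrow^{\text{G}}1=z$ covers both at once. You are also right that membership $(g,f)\in\mathcal{F}_N$ is never used, neither in your argument nor in the paper's.
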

\begin{proof}
Let us consider an arbitrary attribute $a\in A$ and a pair $(g,f)\in\mathcal{F}_N$. 
By definition of the necessity operator, we have that:
\[g^{\upN}(a) = \inf\{g(b)\swarrow^\text{G} R(a,b) \mid b\in B\}\]
 Furthermore, 
taking into account the definition of the G\"odel residuated implication, the previous equality can be written as follows:
\[
g^{\upN}(a) = \inf\{g(b) \mid b\in B \mbox{ and } R(a,b)\nleq g(b)\} 
\]
Notice that, if the set $\{g(b) \mid b\in B \mbox{ and } R(a,b)\nleq g(b)\}$ is the empty set, then its infimum is the maximum of the frame, that is, 1 in this case.

In addition, since the context is a $\top$-normalized context,  there exists at least one object $b'$ such that $R(a, b')=1$. Now, we have to distinguish two cases:
\begin{itemize}
\item  If $R(a,b')\leq g(b')$, then $R(a,b')\&_\text{G} g(b') =\min\{R(a,b'), g(b')\}= 1$ and,  as a consequence, $1= \sup\{ R(a,b)\&_\text{G} g(b) \mid b\in B \}=g^{\upPi}(a)$. Therefore,  the inequality $ g^{\upN}(a)\leq g^{\upPi}(a)$ holds.

\item Otherwise, if $R(a,b')\nleq g(b')$, then $g(b') \in \{g(b) \mid b\in B \mbox{ and } R(a,b)\nleq g(b)\} $. Therefore, we have that:\[ g^{\upN}(a)=\inf\{g(b) \mid b\in B \mbox{ and } R(a,b)\nleq g(b)\} \leq g(b')\]

On the other hand, since $R(a,b')\nleq g(b')$ the following chain holds:

 \[g(b')=\min\{R(a,b'), g(b')\}
  \leq \sup\{ R(a,b)\&_\text{G} g(b) \mid b\in B \}=g^{\upPi}(a) \]

Therefore, from both inequalities we have that $g^{\upN}(a) \leq g^{\upPi}(a)$, for all $a\in A$.
\end{itemize}
Consequently, from the previous items we conclude that $g^{\upN}\leq g^{\upPi}$.
\qed
\end{proof}

Furthermore, we can also establish a relationship between the derivation operators and the possibility operators, as the following result states.
\begin{proposition}\label{prop:fp4}
	Given the frame $([0,1],\leq, \&_\text{G})$, a $\top$-normalized context $(A, B, R)$ and a pair $(g,f)\in\mathcal{F}_N$ satisfying that for every $a\in A$, there exists $b\in B$  such that $g(b)\nleq R(a,b)$, 
	then the inequality ${g}^{\up} \leq {g}^{\upPi}$ holds. 
\end{proposition}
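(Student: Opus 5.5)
Fix $a\in A$ and compare $g^{\up}(a)$ and $g^{\upPi}(a)$ directly from the Gödel formulas. With the Gödel implication,
\[
g^{\up}(a)=\inf\{R(a,b)\swarrow^{\text{G}} g(b)\mid b\in B\}=\inf\{R(a,b)\mid b\in B,\ g(b)\nleq R(a,b)\},
\]
where the infimum of the empty set is $1$. This is the same rewriting used for $g^{\upN}$ in Proposition~\ref{prop:fp3}. The extra hypothesis says that the indexing set $\{b\in B\mid g(b)\nleq R(a,b)\}$ is nonempty for every $a$. So there is an object $b_0$ with $g(b_0)> R(a,b_0)$, and hence $g^{\up}(a)\leq R(a,b_0)$.

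Next, bound $g^{\upPi}(a)$ from below using that same $b_0$. Since $R(a,b_0)< g(b_0)$, we get $R(a,b_0)\&_{\text{G}} g(b_0)=\min\{R(a,b_0),g(b_0)\}=R(a,b_0)$. Therefore
\[
g^{\up}(a)\leq R(a,b_0)=R(a,b_0)\&_{\text{G}} g(b_0)\leq \sup\{R(a,b)\&_{\text{G}} g(b)\mid b\in B\}=g^{\upPi}(a).
\]
As $a$ was arbitrary, this gives $g^{\up}\leq g^{\upPi}$.

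I expect no real obstacle, since the inequality follows pointwise from the hypothesis. The only point needing care is ensuring the infimum defining $g^{\up}(a)$ is not taken over the empty set, which would give $1$; the hypothesis rules this out. Under this argument the $\top$-normalization and the membership $(g,f)\in\mathcal{F}_N$ are not needed. If the intended statement requires them, I would route the argument through Proposition~\ref{prop:fp3} instead, showing $g^{\up}\leq g^{\upN}\leq g^{\upPi}$. However, the direct comparison above already suffices.
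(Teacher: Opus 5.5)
Your proof is correct and is essentially the paper's argument. You rewrite $g^{\up}(a)$ via the G\"odel implication as $\inf\{R(a,b)\mid g(b)\nleq R(a,b)\}$, use the hypothesis to get a witness $b_0$ with $g^{\up}(a)\leq R(a,b_0)$, and conclude from $R(a,b_0)=\min\{R(a,b_0),g(b_0)\}\leq g^{\upPi}(a)$; you are also right that neither $\top$-normalization nor $(g,f)\in\mathcal{F}_N$ is used, and the paper's proof does not use them either. Drop the fallback through Proposition~\ref{prop:fp3}: it is unnecessary, and it would need an inequality $g^{\up}\leq g^{\upN}$ that is established nowhere.
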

\begin{proof}
	Let us consider an arbitrary attribute $a\in A$ and a pair $(g,f)\in\mathcal{F}_N$ satisfying $g(b')\nleq R(a,b')$, with $b'\in B$.
	By definition, we have that  $g^{\up}(a) = \inf\{R(a,b)\swarrow_\text{G} g(b) \mid b\in B\}$.  Taking into account the definition of the G\"odel residuated implication, the previous equality can be written as:
	\[g^{\up}(a) = \inf\{R(a,b) \mid b\in B \mbox{ and } g(b)\nleq R(a,b)\} \]
	Notice that the set $\{R(a,b) \mid b\in B \mbox{ and } g(b)\nleq R(a,b)\}$ is not empty, since $R(a,b')$ belongs to it. Therefore, we have that:
\[g^{\up}(a) = \inf\{R(a,b) \mid b\in B \mbox{ and } g(b)\nleq R(a,b)\}\leq R(a,b')\]	
	
In addition, we know that $b'\in B$ satisfies that $g(b')\nleq R(a,b')$, from which the following chain is deduced: 
\[R(a,b')=\min\{R(a,b'),g(b')\}
 \leq \sup\{ R(a,b)\&_\text{G} g(b) \mid b\in B \}=g^{\upPi}(a)\]

Consequently, from both inequalities we obtain that $g^{\up}  \leq g^{\upPi} $.\qed

\end{proof}

The following result shows another particular case in which the inequality given in the previous result is also satisfied.

\begin{proposition}\label{prop:fp4'}
	Given the frame $([0,1],\leq, \&_\text{G})$, a $\top$-normalized context $(A, B, R)$ and a pair $(g,f)\in\mathcal{F}_N$, if for any $a\in A$ there exists $b\in B$ such that $R(a,b) = g(b) = 1$, then $g^{\upPi}=g_\top$ and, in particular ${g}^{\up} \leq {g}^{\upPi}$ is satisfied.
\end{proposition}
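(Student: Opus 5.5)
Fix an arbitrary attribute $a\in A$. By hypothesis there is an object $b_a\in B$ with $R(a,b_a)=g(b_a)=1$. The plan is first to show directly that $g^{\upPi}(a)=1$ for every $a$, and then to read off the inequality ${g}^{\up}\leq{g}^{\upPi}$. The first step is purely a computation with the G\"odel conjunctor:
\[
g^{\upPi}(a)=\sup\{R(a,b)\&_\text{G} g(b)\mid b\in B\}\geq \min\{R(a,b_a),g(b_a)\}=\min\{1,1\}=1 .
\]
Since $1$ is the top of $[0,1]$, this gives $g^{\upPi}(a)=1$. As $a$ was arbitrary, $g^{\upPi}$ is the constant map with value $\top_1=1$. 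This is what the statement denotes by $g_\top$; strictly it is $f_\top$, the constant top fuzzy set of attributes.

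The second claim then follows immediately. Every value $g^{\up}(a)=\inf\{R(a,b)\swarrow^\text{G} g(b)\mid b\in B\}$ lies in $[0,1]$, so $g^{\up}(a)\leq 1=g^{\upPi}(a)$ for all $a\in A$. Hence ${g}^{\up}\leq{g}^{\upPi}$.

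Neither the membership $(g,f)\in\mathcal{F}_N$ nor $\top$-normalization is really used beyond guaranteeing the setting; the hypothesis on $b_a$ does all the work. There is no substantive obstacle. The only point of care is the notational one: the top attribute fuzzy set should be identified with $f_\top$, and I would remark on this when writing the proof. I would also note that, combined with Propositions~\ref{prop:fp1} and~\ref{prop:fp3}, this yields $g^{\upN}=g^{\upPi}=f_\top$ in this situation, which is consistent with the result.
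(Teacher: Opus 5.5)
Your proof is correct and matches the paper's own argument: the witness $b$ with $R(a,b)=g(b)=1$ gives $g^{\upPi}(a)=1$ for each $a$, and then $g^{\up}\leq g^{\upPi}$ is trivial because every value lies in $[0,1]$. You are also right that the constant top map obtained here is a fuzzy set of attributes, so it should be written $f_\top$ rather than $g_\top$.
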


\begin{proof}
We consider $a\in A$ such that there exists $b'\in B$ such that $R(a,b') = g(b') = 1$, then  $g^{\upPi}(a)=\sup\{ R(a,b)\&_\text{G} g(b) \mid b\in B \}=1$ and the inequality $g^{\up}(a) \leq g^{\upPi}(a)$ trivially holds.\qed
\end{proof}

As we have already seen in Example~\ref{ex:nordenfg}, the existence of pairs of $\mathcal{F}_N$ does not imply that these pairs determine independent blocks of concepts. In what follows, we will show  under what conditions these pairs provide intervals or blocks of concepts (that could not be independent). 	
 Taking into account Proposition~\ref{prop:fp3} and Proposition~\ref{prop:fp4}, we can find a relationship between  the two concepts $\langle g^{\up\down}, {g}^{\up}\rangle$ and $\langle f^{\down}, {f}^{\down\up}\rangle$ obtained from pairs in $\mathcal{F}_N$. 

\begin{proposition}\label{prop:fp5}
	Let $(A, B, R)$ be $\top$-normalized context, the frame $([0,1],\leq, \&_\text{G})$ and a pair $(g,f)\in \mathcal{F}_N$ satisfying that for every $a\in A$ there exists $b\in B$ such that  $g(b)\nleq R(a,b)$. Then, the inequality ${f}^{\down}\leq {g}^{\up\down}$ holds and, therefore, $\langle f^{\down}, {f}^{\down\up}\rangle \preceq \langle g^{\up\down}, {g}^{\up}\rangle$. 
\end{proposition}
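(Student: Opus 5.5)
Under these hypotheses I expect a short monotonicity argument to work, chaining Propositions~\ref{prop:fp3}, \ref{prop:fp4} and \ref{prop:fp1}. First I will record that, since $(g,f)\in\mathcal{F}_N$, Proposition~\ref{prop:fp1} gives $g^{\upPi}\leq g^{\upN}=f$ and Proposition~\ref{prop:fp3} gives $f=g^{\upN}\leq g^{\upPi}$. Hence $g^{\upPi}=f$. Next, the extra hypothesis (for every $a\in A$ there exists $b\in B$ with $g(b)\nleq R(a,b)$) is exactly the hypothesis of Proposition~\ref{prop:fp4}, so $g^{\up}\leq g^{\upPi}=f$.

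The conclusion then follows from antitonicity of the derivation operator ${}^{\down}$, which is the antitone part of the Galois connection $({}^{\up},{}^{\down})$. From $g^{\up}\leq f$ we get $f^{\down}\leq g^{\up\down}$, which is the claimed inequality between extents. Both $\langle f^{\down},f^{\down\up}\rangle$ and $\langle g^{\up\down},g^{\up}\rangle$ are multi-adjoint concepts, because $f^{\down\up\down}=f^{\down}$ and $g^{\up\down\up}=g^{\up}$. Since the order in $\mathcal{M}$ is the order on extents, we obtain $\langle f^{\down}, f^{\down\up}\rangle\preceq\langle g^{\up\down},g^{\up}\rangle$.

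The only step I need to double-check is whether the pointwise hypothesis holding for \emph{every} $a$ is really what Proposition~\ref{prop:fp4} requires. As stated there it is, and the per-attribute argument in that proof gives the inequality componentwise.

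I also have to check that the G\"odel frame with $L_1=L_2=P=[0,1]$ is the setting in which Propositions~\ref{prop:fp1} and \ref{prop:fp3} are stated. This holds, since Proposition~\ref{prop:fp1} is valid for any frame. Apart from these checks, the proof is just composing the earlier inequalities.
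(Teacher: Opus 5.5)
Your proof is correct and is essentially the paper's argument: you combine Propositions~\ref{prop:fp1} and~\ref{prop:fp3} to get $g^{\upPi}=g^{\upN}=f$, use Proposition~\ref{prop:fp4} for $g^{\up}\leq g^{\upPi}$, and apply the antitone operator ${}^{\down}$. The only differences are cosmetic. The paper applies ${}^{\down}$ first and substitutes $g^{\upPi\down}=g^{\upN\down}=f^{\down}$ afterwards, whereas you substitute before applying ${}^{\down}$, and your remark that both pairs are concepts by the closure properties of $({}^{\up},{}^{\down})$ makes explicit a step the paper leaves implicit.
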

\begin{proof}
Applying the operator $^\down$ to the inequality  given by Proposition~\ref{prop:fp4}, we obtain that $g^{\upPi\down} \leq g^{\up\down}$, since the operator $^\down$ is order-reversing. Moreover, taking into consideration Proposition~\ref{prop:fp1} and Proposition~\ref{prop:fp3}, the equality $g^{\upPi} = g^{\upN}$ holds. Therefore, replacing $g^{\upPi}$ by $g^{\upN}$ in the first inequality, we have that $f^{\down}=g^{\upN\down} =  g^{\upPi\down} \leq g^{\up\down}$. 
\qed
\end{proof}

Furthermore, the following result shows another situation in which the inequality shown in the previous proposition is also satisfied.

\begin{proposition}\label{prop:fp5'}
	Let $(A, B, R)$ be a $\top$-normalized context, the frame $([0,1],\leq, \&_\text{G})$ and a pair $(g,f)\in \mathcal{F}_N$. Given $a\in A$ if there exists $b\in B$ satisfying that $R(a,b) = g(b) = 1$, the inequality ${f}^{\down} \leq {g}^{\up\down}$ is satisfied, and so $\langle f^{\down}, {f}^{\down\up}\rangle \preceq \langle g^{\up\down}, {g}^{\up}\rangle$.
\end{proposition}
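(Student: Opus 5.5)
The proof will mirror that of Proposition~\ref{prop:fp5}, with Proposition~\ref{prop:fp4'} playing the role that Proposition~\ref{prop:fp4} played there. The statement quantifies over a given $a$, so I will read the hypothesis as in Proposition~\ref{prop:fp4'}: for every $a\in A$ there is some $b\in B$ with $R(a,b)=g(b)=1$.

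The argument runs in three steps.

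\emph{Step 1.} Invoke Proposition~\ref{prop:fp4'}. It gives $g^{\upPi}=f_\top$, the constant map $1$ on $A$ (the paper writes $g_\top$ there, but this is clearly meant). In particular, $g^{\up}\leq g^{\upPi}$. Pointwise this is immediate: $g^{\upPi}(a)\geq R(a,b)\&_\text{G}g(b)=\min\{1,1\}=1$.

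\emph{Step 2.} Combine Proposition~\ref{prop:fp1} and Proposition~\ref{prop:fp3}. Together they yield $g^{\upPi}=g^{\upN}$. Since $(g,f)\in\mathcal F_N$, we have $g^{\upN}=f$, hence $f=g^{\upPi}$.

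\emph{Step 3.} Apply the order-reversing operator ${}^{\down}$ to $g^{\up}\leq g^{\upPi}$. This gives
\[
f^{\down}=g^{\upN\down}=g^{\upPi\down}\leq g^{\up\down}.
\]
Because $g^{\up\down}$ and $f^{\down}$ are extents of the multi-adjoint concepts $\langle g^{\up\down},g^{\up}\rangle$ and $\langle f^{\down},f^{\down\up}\rangle$, the concept-lattice ordering gives $\langle f^{\down},f^{\down\up}\rangle\preceq\langle g^{\up\down},g^{\up}\rangle$.

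A shortcut is available: $f=f_\top$ forces $f^{\down}(b)=\inf_a R(a,b)\nwarrow 1=\inf_a R(a,b)$. Since the context is normalized, no column is entirely nonzero, so this infimum is $0$ and $f^{\down}=g_\bot$. The inequality is then trivial, and I may note this as an alternative.

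The main obstacle is only interpretive. If the hypothesis were meant for a single fixed $a$ rather than for all $a$, Step~1 would fail at the other attributes. In that case I would instead combine the pointwise conclusions of Propositions~\ref{prop:fp4} and~\ref{prop:fp4'}, one at each attribute. I will adopt the ``for every $a$'' reading, consistent with Proposition~\ref{prop:fp4'}.
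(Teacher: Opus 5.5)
Your argument is correct and is the paper's proof: rerun Proposition~\ref{prop:fp5} with Proposition~\ref{prop:fp4'} in place of Proposition~\ref{prop:fp4}, and use Propositions~\ref{prop:fp1} and~\ref{prop:fp3} to rewrite $f^{\down}=g^{\upN\down}=g^{\upPi\down}$. Two remarks on the reading: under your universal reading the hypothesis forces $(g,f)=(g_\top,f_\top)$, whereas the paper applies the result under the single-attribute reading (to $(g_3,f_3)$ in Example~\ref{ex:intervals}); your fallback does cover that case, but it needs a justification you did not give, namely that the object $b$ with $g(b)=1$ witnesses the hypothesis of Proposition~\ref{prop:fp4} at every $a'$ with $R(a',b)<1$ and that of Proposition~\ref{prop:fp4'} at every $a'$ with $R(a',b)=1$, so $g^{\up}\leq g^{\upPi}$ holds at every attribute.
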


\begin{proof}
Similarly to the proof given in Proposition~\ref{prop:fp5}, but taking into account Proposition~\ref{prop:fp4'} instead of Proposition~\ref{prop:fp4}.\qed
\end{proof}


Notice that, if we consider in Definition~\ref{def.tnomal} that the fuzzy subsets associated with the  columns of $R$ be normal fuzzy subsets, that is, $R$ is normal by columns instead of by rows, then we obtain Propositions~\ref{prop:fp3},~\ref{prop:fp4}, and~\ref{prop:fp4'} related to the mapping $f$   of the pair $(g,f)\in\mathcal{F}_N$. As a consequence, Propositions~\ref{prop:fp5} and~\ref{prop:fp5'} also hold, if $R$ is normal by columns.

In the following example,  we  illustrate all the previous results. 

\begin{example}\label{ex:intervals}

Given the frame $([0, 1], \leq, \&_{\text{G}})$, and the context $(A, B, R_1)$ composed of the set of attributes $A =\{a_1, a_2, a_3\}$, the set of objects $B = \{b_1, b_2, b_3\}$ and the relation $R_1\colon A\times B \rightarrow [0,1]$ defined in Table~\ref{tabla:r_no-norm}.
	
	\begin{table}[!h]
				\begin{center}
				\begin{tabular}{|c|ccc|}
					\hline
					$R_1$ & $b_1$ & $b_2$ & $b_3$ \\ \hline
					$a_1$& 0.5 & 0.25 & 0 \\
					$a_2$ & 0.5 & 1.0 & 0\\
					$a_3$ & 0 & 0 & 0.75\\
					\hline
				\end{tabular}
			\end{center}
		\caption{Fuzzy relation $R_1$ of Example~\ref{ex:intervals}.}\label{tabla:r_no-norm}
	\end{table}
	As we can observe, the fuzzy relation $R_1$ is a normalized context, but it is not a normal fuzzy relation, therefore it is not a $\top$-normalized context. This fact gives rise to not satisfying the hypotheses  of Proposition~\ref{prop:fp3}. For instance, the pair $(g,f) = (\{ b_1/1, b_2/0.5, b_3/0 \}, \{ a_1/1, a_2/0.5, a_3/0\})$ belongs to $\mathcal{F}_N$ and, if we apply the possibility operator to the fuzzy subset $g$, we obtain that:
	\[g^{\upN}= f = \{a_1/1, a_2/0.5, a_3/0\} \nleq \{a_1/0.5, a_2/0.5, a_3/0\}=g^{\upPi}\]

\begin{table}[!h]
	\begin{center}
		\begin{tabular}{|c|ccc|}
			\hline
			$R_2$ & $b_1$ & $b_2$ & $b_3$ \\ \hline
			$a_1$& 1 & 0.25 & 0 \\
			$a_2$ & 0.5 & 1 & 0\\
			$a_3$ & 0 & 0 & 1\\
			\hline
		\end{tabular}
	\end{center}
	\caption{Fuzzy relation $R_2$ of Example~\ref{ex:intervals}.}\label{tabla:r_norm}
\end{table}
Now, we consider a $\top$-normalized context $(A, B, R_2)$ in the same frame and where the fuzzy relation $R_2$ is given in Table~\ref{tabla:r_norm}. 
We can compute all pairs from $\mathcal{F}_N$, some of them are listed in Table~\ref{ex:fn3}. The associated concept lattice of the context along with the list of concepts are shown in Figure~\ref{ex:fig_intervalos}.

	\begin{table}[!h]
	\begin{center}
		\begin{tabular}{l}
			$(g_\bot, f_\bot) = (\{ b_1/0, b_2/0, b_3/0\}, \{ a_1/0, a_2/0, a_3/0 \})$ \\
			$(g_1, f_1) = (\{ b_1/0, b_2/0, b_3/0.5\}, \{ a_1/0, a_2/0, a_3/0.5 \})$ \\
			$(g_2, f_2) = (\{ b_1/0.75, b_2/0.5, b_3/0 \}, \{ a_1/0.75, a_2/0.5, a_3/0 \})$  \\
			$(g_3, f_3) = (\{ b_1/1, b_2/0.75, b_3/0 \}, \{ a_1/1, a_2/0.75, a_3/0 \})$ \\
			$(g_4, f_4) = (\{ b_1/0.75, b_2/0.5, b_3/0.5 \}, \{ a_1/0.75, a_2/0.5, a_3/0.5 \})$\\
			$(g_5, f_5) = ( \{ b_1/0, b_2/0, b_3/0.5 \}, \{ a_1/0, a_2/0, a_3/0.5 \})$ \\
			$(g_\top, f_\top) = (\{ b_1/1, b_2/1, b_3/1 \}, \{ a_1/1, a_2/1, a_3/1 \})$ 
		\end{tabular}
	\end{center}
	\caption{Some pairs of $\mathcal{F}_N$ in the context $(A, B,R_2)$ of Example~\ref{ex:intervals}.}\label{ex:fn3}
\end{table}	

Taking into account $(g_1, f_1)$, it easy to check that $g_1(b)\leq R_2(a_3, b)$ for every $b\in B$. Therefore, we are not under the hypothesis of Proposition~\ref{prop:fp4}. Thus, we can observe that Proposition ~\ref{prop:fp4} does not hold for $a_3\in A$, that is,

\[g_1^{\up}(a_3)= 1 \nleq 0.75=g_1^{\upPi}(a_3)\]

However, considering the pair $(g_2,f_2)$ which satisfies the hypothesis of Proposition~\ref{prop:fp4}, we obtain that  Proposition~\ref{prop:fp5} also holds for the pair $(g_2,f_2)$, that is, $\langle f_2^\down, f_2^{\down\up}\rangle\preceq \langle g_2^{\up\down}, g_2^{\up}\rangle$ as we show below:
\[ f_2^\down = \{ b_1/1, b_2/0.25, b_3/0 \} \leq \{ b_1/1, b_2/1, b_3/0\} = g_2^{\up\down}\]

Notice that these concepts correspond to $C_3$ and $C_5$, respectively, providing an interval of concepts in the concept lattice.

\begin{figure}[!h]
				\begin{minipage}{0.7\textwidth}
						\begin{tabular}{l}
								$C_0 =\langle\{b_1/0, b_2/0, b_3/0\}, \{a_1/1.0 , a_2/1.0 , a_3/1.0 \}\rangle$\\
								$C_1 = \langle\{b_1/0.5, b_2/0.25, b_3/0 \} , \{a_1/1.0 , a_2/1.0, a_3/0 \}\rangle$\\
								$C_2 = \langle\{b_1/0, b_2/0, b_3/1 \} , \{a_1/0 , a_2/0 , a_3/1 \}\rangle$\\
								$C_3 = \langle\{b_1/1, b_2/0.25, b_3/0 \} , \{a_1/1.0 , a_2/0.5, a_3/0 \}\rangle$\\
								$C_4 = \langle\{b_1/0.5, b_2/1, b_3/0 \} , \{a_1/0.25 , a_2/1, a_3/0 \}\rangle$\\
								$C_5 = \langle\{b_1/1, b_2/1, b_3/0  \} , \{a_1/0.25 , a_2/0.5, a_3/0 \}\rangle$\\
								$C_6 =\langle\{b_1/1, b_2/1, b_3/1\} , \{a_1/0, a_2/0, a_3/0 \}\rangle$
							\end{tabular}
					\end{minipage}
				\begin{minipage}{0.2\textwidth}
						\includegraphics[scale=0.1]{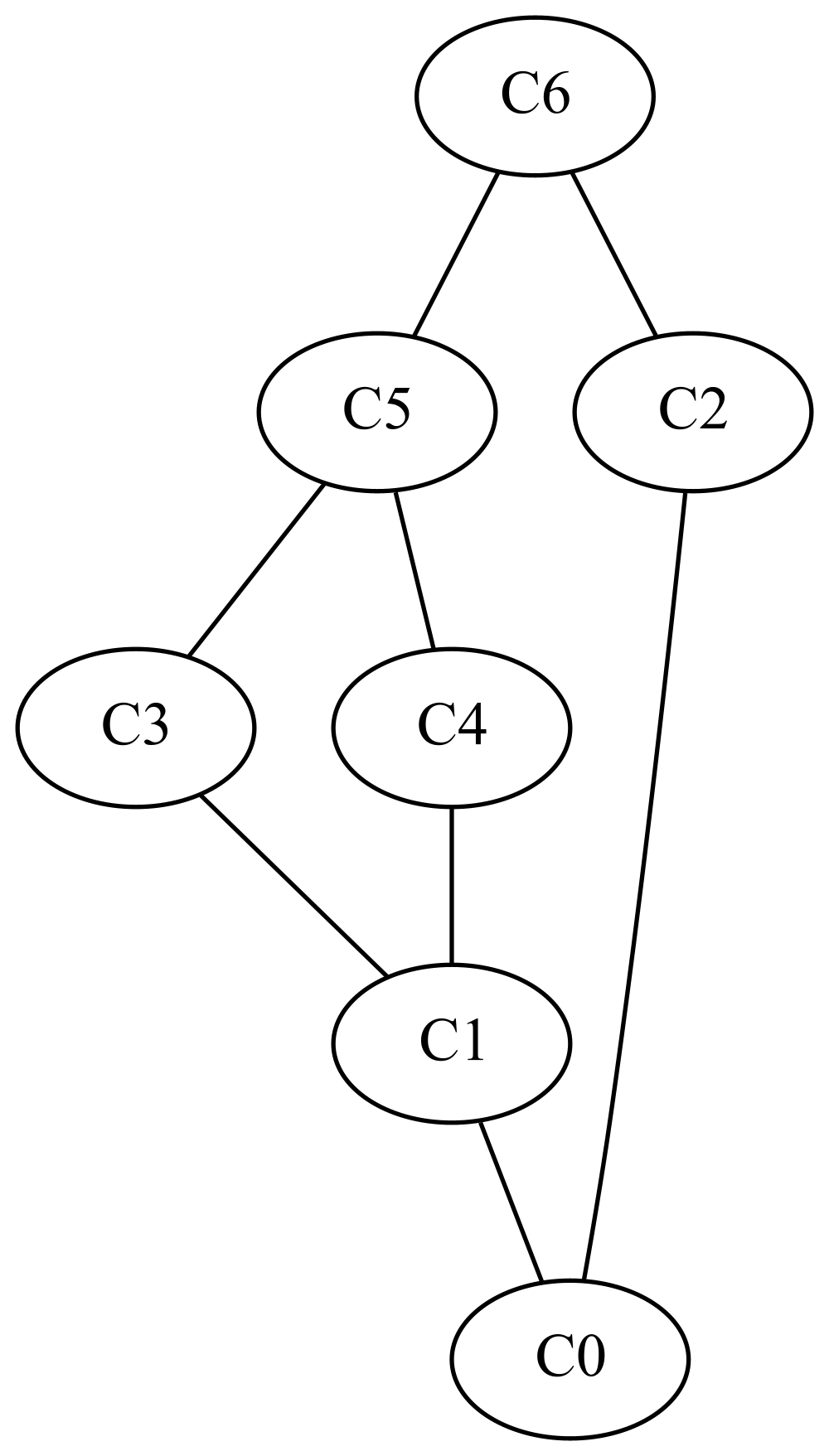}
					\end{minipage}
				\caption{List of concepts and concept lattice associated with the context $(A, B, R_2)$ of Example~\ref{ex:intervals}.}\label{ex:fig_intervalos}
				\end{figure}
				
On the other hand, according to Proposition~\ref{prop:fp5'}, we can verify that $(g_3,f_3)$ satisfies that $g_3(b_1) = R(a_1,b_1) = 1$ and, in this occasion, we obtain a block of concepts delimited by $C_1$ and $C_5$, that is, 
\[f_3^\down = \{ b_1/1, b_2/1, b_3/0 \}\leq \{ b_1/0.5, b_2/0.25, b_3/0\} = g_3^{\up\down}\]

Similarly, the block which is  only composed of the concept $C_2$ is determined by the pair $(g_5,f_5)$, since $\langle f_5^\down, f_5^{\down\up}\rangle= \langle g_5^{\up\down}, g_5^{\up}\rangle = C_2$.

Notice that there are some pairs of $\mathcal{F}_N$ that  provide an interval which represents the whole concept lattice. For instance, Proposition~\ref{prop:fp5} holds for the pair $(g_4,f_4)$ and this pair determines the interval $C_0 = \langle f_4^\down, f_4^{\down\up}\rangle \preceq \langle g_4^{\up\down}, g_4^{\up}\rangle = C_6$.

Note that, the  corresponding concept lattice of the latter context in this example contains  two independent blocks of concepts. Thanks to the results obtained in this paper, we have been able to establish intervals of concepts within the concept lattice, from the pairs in $\mathcal{F}_N$. One of these intervals delimits one independent block. Specifically,  this block has been determined from the pair $(g_3,f_3)$ in $\mathcal{F}_N$. Therefore, the following step will be to discover what elements in $\mathcal{F}_N$ can characterize the independent blocks of concepts in a concept lattice.\qed

\end{example}

Consequently, we can find different intervals or blocks of concepts by means of pairs of $\mathcal{F}_N$ under certain assumptions,  as we have proven in the previous results and  illustrated in Example~\ref{ex:intervals}. Notice that the introduced consequences  can be easily proved considering in the frame the Product conjunctor instead of the G\"odel conjunctor.

\section{Conclusions and future work}\label{conclusion}

In this paper, we have continued with the study initiated in~\cite{aragonCCIS22}, on the necessity operator when it is used to factorize a formal context into independent subcontexts, which can provide particular consequences/information from the whole dataset and from which the original context can be recovered. We have presented several properties that are satisfied by blocks of concepts associated with independent subcontexts in classical FCA. For example, we have identified the bounds associated with each block of concepts, that is, we have identified the interval of the concept lattice associated with each element in $\mathcal C_N$. Furthermore, we have shown how some of the properties in the classical case are extended to the fuzzy framework. We have also provided   a specific fuzzy frame in which we have {obtained} additional properties, such as, the determination of intervals of concepts by means of the elements $\mathcal F_N$. 

In the near future, we are interested in providing a mechanism to factorize formal contexts in the multi-adjoint framework. Moreover, we will continue with the study of more properties related to independent subcontexts in both, the classical and the fuzzy framework. In particular, we will study under what conditions the intervals of concepts, determined by means of elements in $\mathcal F_N$, characterize independent subcontexts. We will also consider alternative ways to address the decomposition of formal contexts into smaller subcontexts. Furthermore,  factorization techniques can be very useful to analyze the information contained in real databases, as it has been shown in~\cite{BELOHLAVEK2015,Chi2023}. For that reason, we will apply the obtained results to real datasets, as the ones given in~\cite{Ene2008,Myllykangas2006}, in order to complement the considered factorization techniques and, in this way  obtaining more optimal mechanisms.

\end{document}